\def\ARXIVBUILD{1}
\documentclass[11pt]{article}

\usepackage[
  letterpaper,
  left=0.92in,
  right=0.92in,
  top=0.4in,
  bottom=1.15in,
  headsep=0.16in,
  footskip=0.38in
]{geometry}
\usepackage{times}
\usepackage[T1]{fontenc}
\usepackage{microtype}
\usepackage{graphicx}
\usepackage{amsmath,amssymb,amsthm,booktabs}
\usepackage{multirow}
\usepackage[table]{xcolor}
\usepackage{xspace}
\usepackage{pifont}
\usepackage{float}
\usepackage{subcaption}
\usepackage{wrapfig}
\usepackage{longtable}
\usepackage{needspace}
\usepackage[authoryear,round]{natbib}
\usepackage{titlesec}
\usepackage[most]{tcolorbox}
\usepackage{fancyhdr}
\usepackage{url}
\usepackage{hyperref}

\usepackage{amsmath,amsfonts,bm}

\def\eqref#1{equation~\ref{#1}}

\def\1{\bm{1}}

\DeclareMathAlphabet{\mathsfit}{\encodingdefault}{\sfdefault}{m}{sl}
\SetMathAlphabet{\mathsfit}{bold}{\encodingdefault}{\sfdefault}{bx}{n}

\hypersetup{
  colorlinks=true,
  linkcolor=blue!65!black,
  citecolor=blue!65!black,
  urlcolor=blue!70!black,
  pdfauthor={Yifan Wang et al.},
  pdftitle={Certified Long-Horizon Code Agent Evolution via Validation-Gated Skill Optimization}
}

\definecolor{abstractbg}{RGB}{235,247,252}
\newcommand{\finishdocument}{}

\titleformat{\section}{\Large\bfseries}{\thesection}{0.7em}{}
\titleformat{\subsection}{\large\bfseries}{\thesubsection}{0.7em}{}
\titleformat{\subsubsection}{\normalsize\bfseries}{\thesubsubsection}{0.7em}{}
\titlespacing*{\section}{0pt}{2.2ex plus 0.4ex}{0.8ex}
\titlespacing*{\subsection}{0pt}{1.8ex plus 0.3ex}{0.5ex}

\newcommand{\microsoftwordmark}{%
  \includegraphics[width=1.5in]{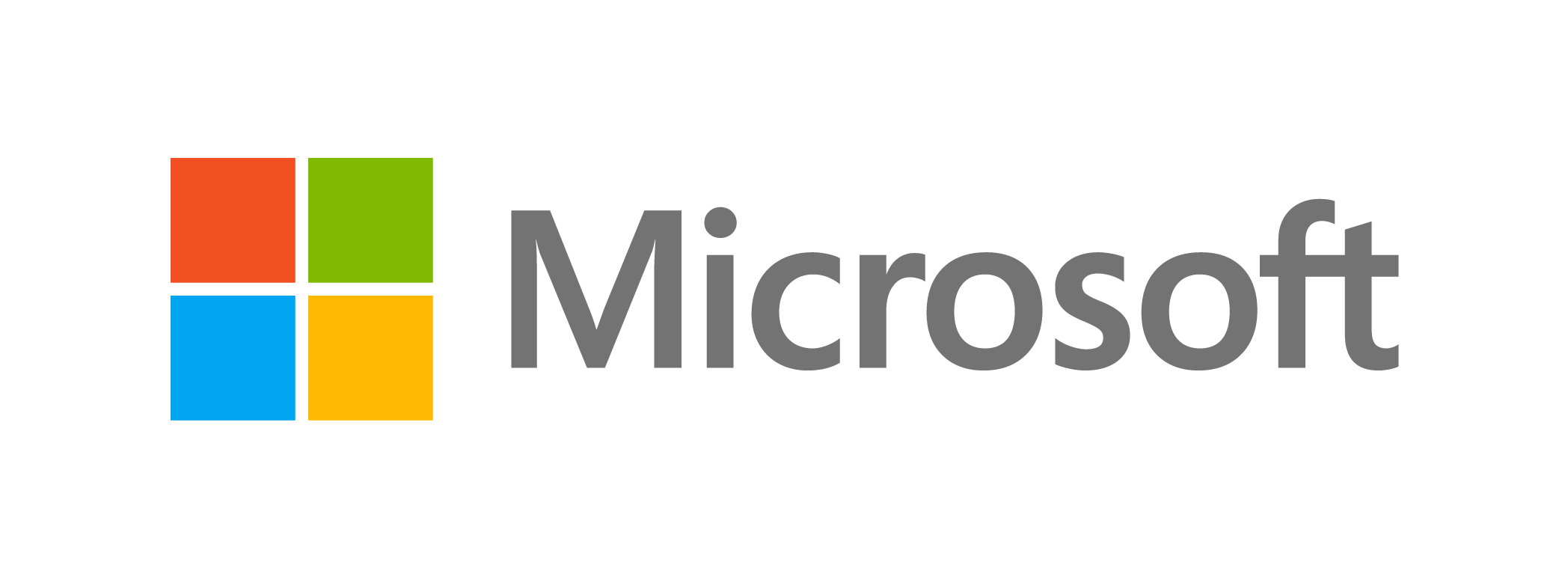}}
\newcommand{\positionedmicrosoftwordmark}{%
  \begin{tikzpicture}[overlay,baseline]
    \node[anchor=base west,inner sep=0pt,yshift=-44pt] at (0,0)
      {\microsoftwordmark};
  \end{tikzpicture}}

\fancypagestyle{preprintfirst}{%
  \fancyhf{}
  \fancyhead[C]{%
    \makebox[\headwidth][s]{%
      \positionedmicrosoftwordmark
      \hfill
      \raisebox{-30pt}[0pt][0pt]{2026 Aug}}}
  \fancyfoot[C]{\thepage}
  \renewcommand{\headrulewidth}{0.7pt}
  \renewcommand{\headrule}{\vskip 26pt\hbox to\headwidth{\color{black!65}\leaders\hrule height \headrulewidth\hfill}}
  \renewcommand{\footrulewidth}{0pt}
}
\fancypagestyle{preprint}{%
  \fancyhf{}
  \fancyfoot[C]{\thepage}
  \renewcommand{\headrulewidth}{0pt}
  \renewcommand{\footrulewidth}{0pt}
}
\makeatletter
\renewcommand{\maketitle}{%
  \thispagestyle{preprintfirst}
  \vspace*{8.35pt}
  {\centering
    {\fontsize{19}{23}\selectfont\bfseries \@title\par}
    \vspace{0.14in}
    {\normalsize \@author\par}}  
  \vspace{0.01in}
}
\makeatother

\renewenvironment{abstract}{%
  \begin{tcolorbox}[
    enhanced,
    colback=abstractbg,
    colframe=abstractbg,
    boxrule=0pt,
    arc=9pt,
    left=17pt,right=17pt,top=11pt,bottom=10pt]
}{%
  \vspace{6pt}
  {\color{black!20}\hrule height 0.5pt}
  \vspace{5pt}
  {\footnotesize
  Work done during an internship at Microsoft Research. Correspondence:
  Yifan Wang (\href{mailto:wang5617@purdue.edu}{wang5617@purdue.edu}).}
  \end{tcolorbox}
}

\begin{document}

\newtheorem{remark}{Remark}
\newtheorem{proposition}{Proposition}
\newtheorem{definition}{Definition}
\newtheorem{theorem}{Theorem}
\newtheorem{lemma}{Lemma}
\newtheorem{corollary}{Corollary}
\theoremstyle{definition}
\newtheorem{assumption}{Assumption}

\definecolor{locushl}{RGB}{219,234,254}
\definecolor{gaingreen}{RGB}{22,163,74}
\definecolor{circlednum}{RGB}{0,0,0}
\definecolor{settinghl}{RGB}{255,244,214}
\definecolor{gainpos}{RGB}{45,125,85}
\definecolor{gainneg}{RGB}{170,70,70}
\newcommand{\circlednum}[1]{\textcolor{black}{\ding{\numexpr171+#1\relax}}}
\providecommand{\gain}[1]{\,\textcolor{gaingreen}{(+#1)}}
\providecommand{\loss}[1]{\,\textcolor{red}{(-#1)}}

\newcommand{\method}{\textsc{\texttt{VALVE}}\xspace}
\newcommand{\ours}{\method}

\newcommand{\fix}{\marginpar{FIX}}
\newcommand{\new}{\marginpar{NEW}}

\makeatletter
\newcommand{\arxivonly}{\ifdefined\ARXIVBUILD\expandafter\@firstofone\else\expandafter\@gobble\fi}
\newcommand{\iclronly}{\ifdefined\ARXIVBUILD\expandafter\@gobble\else\expandafter\@firstofone\fi}
\makeatother

\makeatletter
\newcommand{\markmainend}{\label{sec:mainend}\pdfsavepos\write\@auxout{\string\gdef\string\mainendpos{\the\pdflastypos}\string\typeout{MAINEND ypos=\the\pdflastypos sp page=\thepage}}}
\makeatother

\title{Certified Long-Horizon Code Agent Evolution via Validation-Gated Skill Optimization}
\author{
Yifan Wang$^{1}$ \quad Hao Cheng$^{6}$ \quad Xiaomin Li$^{6}$ \quad
Yuexing Hao$^{6}$ \quad Hemanth Neelgund Ramesh$^{2}$\\
Dongwon Jung$^{3}$ \quad Hao Tang$^{6}$ \quad Keru Wang$^{4}$ \quad
Chenliang Zhou$^{5}$ \quad Qianhui Wu$^{6}$ \quad Wenlin Yao$^{6}$\\
Ananth Grama$^{1}$ \quad Andrzej Banburski-Fahey$^{6}$ \quad
Baolin Peng$^{6}$ \quad Jaron Lanier$^{6}$ \quad Jianfeng Gao$^{6}$\\[4pt]
\normalfont
{\color{black!58}
$^{1}$Purdue University \quad
$^{2}$University of Washington \quad
$^{3}$University of California, Davis\\
$^{4}$New York University \quad
$^{5}$University of Cambridge \quad
$^{6}$Microsoft}
}

\maketitle

\begin{abstract}
Long horizon agent self-evolution without model weight updates is essential for enabling deployed agents to accumulate reusable skills and improve over time.
Prior self-evolution work has focused primarily on short-horizon tasks, while repository-level software engineering remains unexplored despite being an ideal testbed for long-horizon adaptation. In this setting, agents are required to solve streams of sequential tasks, navigate complex dependencies with evolving repositories and persistently store and reuse experience.
Text-based skill optimization offers an efficient,
non-parametric approach for such adaptation. However, existing methods often
suffer from unstable updates, performance drawdown, and agent collapse over
extended deployments. 
In this paper, we formalize the concept of in-context self-evolution
and introduce \ours{}, a validated-gated framework for long-horizon skill optimization. We establish finite convergence, provide theoretical guarantees for
future-task gain and drawdown, and derive the validation and evaluation
holdout sizes required for a prescribed tolerance, with leading-order scaling
$N\gtrsim s^2L/\gamma^2$.

Empirically, our pipeline, \ours{} achieves stable self-improvement over evolution horizon spanning more than $\textbf{1{,}000}$ \textbf{SWE tasks}, with average final and peak gains of $14.9$ and $16.5$ points across three frontier models (GPT-5.5, Claude-4.6 and MiniMax-M2.7). The validation gate reduces average drawdown by $\textbf{75\%}$ and produces an $\mathbf{11\times}$ more compact skill bank than ungated evolution. We further present extensive ablations identifying the design choices most critical to long-horizon skill evolution.
\end{abstract}

\providecommand{\teaserwidth}{\linewidth}
\begin{figure}[H]
\vspace{-1\baselineskip}
\centering
\begin{minipage}{\teaserwidth}
\begin{subfigure}[t]{0.53\linewidth}
    \centering
    \includegraphics[width=\linewidth]{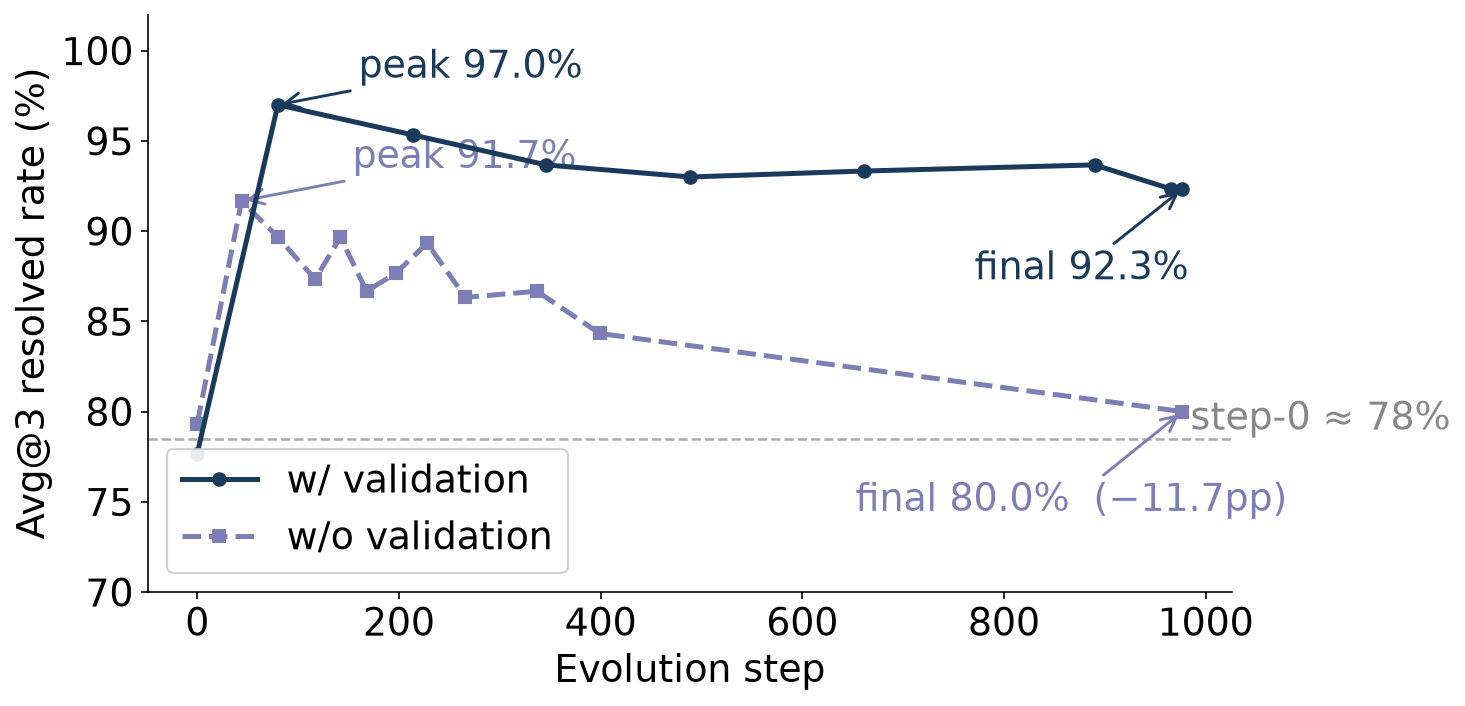}
    \caption{Claude-4.6 evolution trajectory on \texttt{pygments}.}
    \label{fig:evolution-curve-main}
\end{subfigure}
\hfill
\begin{subfigure}[t]{0.46\linewidth}
    \centering
    \includegraphics[width=\linewidth]{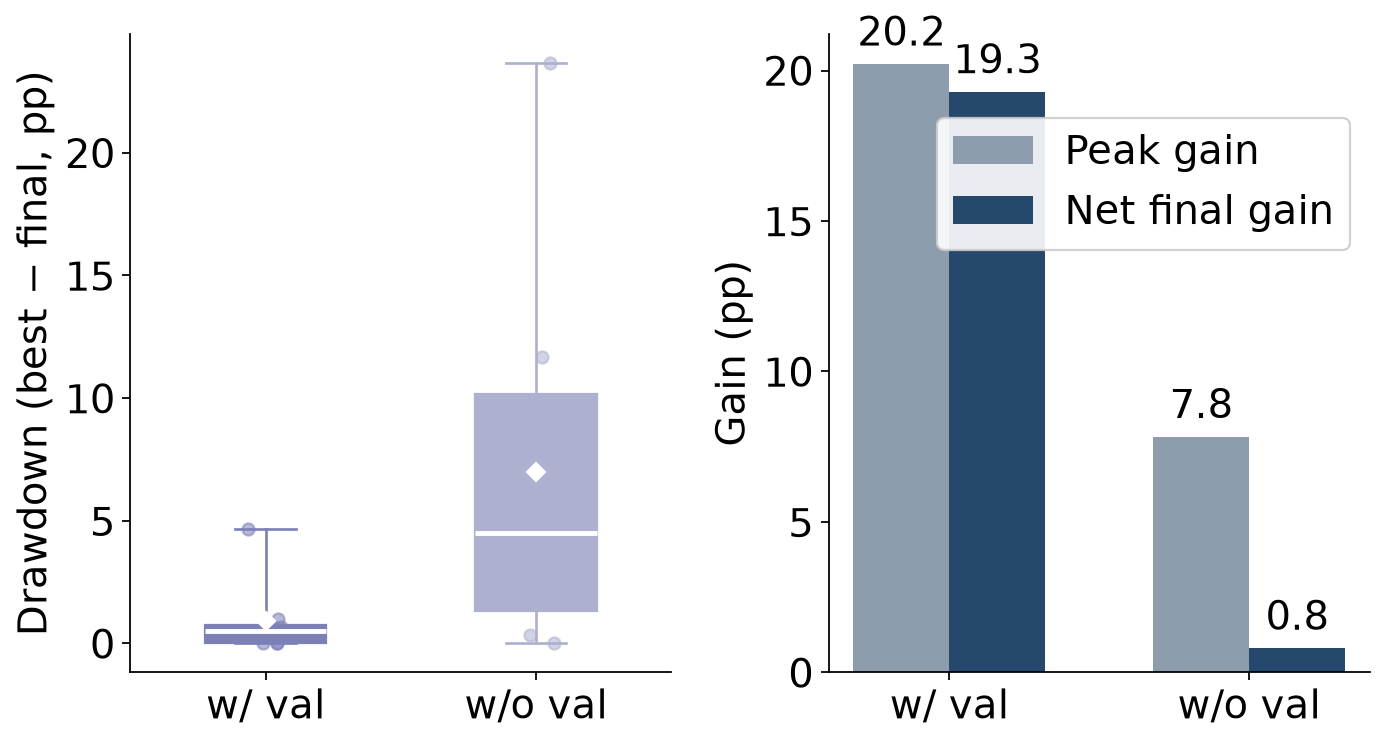}
    \caption{Aggregate drawdown and retained gain.}
    \label{fig:drawdown-main}
\end{subfigure}
\end{minipage}
\vspace{-0.2\baselineskip}
\caption{\textbf{Validation stabilizes long-horizon self-evolution.}
\textbf{(a)} Avg@3 resolved rate over approximately $1{,}000$
SWE-smith task arrivals. Both runs improve early, but the ungated run falls
from a $91.7\%$ peak to $80.0\%$, while the validated run peaks at $97.0\%$ and
ends at $92.3\%$. \textbf{(b)} Across all twelve model--repository runs,
validation reduces average drawdown by $75\%$.}
\label{fig:validation-effect}
\vspace{-1.0\baselineskip}
\end{figure}

\section{Introduction}
\label{sec:intro}

Long-horizon in-context self-evolution requires an agent to continually learning throughout
deployment while its model weights remain frozen.
Repository-level software engineering is an ideal testbed: a code agent must process approximately sequential task arrivals (e.g., up to $1{,}000$ requests), navigate complex codebases and dependencies, and carry experience forward across diverse issues.
After solving a task, it may extract reusable principles and procedures into a persistent skill bank and recall relevant items for future tasks.
This form of test-time adaptation is attractive because it requires neither model retraining nor changes to the solver harness.
It connects prompt optimization \citep{zhou2023ape,pryzant2023protegi,yang2024opro,yuksekgonul2024textgrad,agrawal2025gepa}, context engineering \citep{zhang2025ace,ye2026mce}, skill optimization \citep{yang2026skillopt,shen2026skilloptlite}, and agent self-improvement \citep{zhang2025dgm,wang2025hgm}, all of which adapt external text through execution feedback.
However, most existing methods are evaluated in bounded optimization settings rather than over extended task streams. It remains unclear whether repeated textual updates continue to help when the same persistent memory must support hundreds of subsequent decisions \citep{shridhar2021alfworld,dunn2017searchqa,zhang2025ace,ye2026mce}.

Motivated by recent methods \citep{yang2026skillopt,shen2026skilloptlite}, we study long-horizon self-evolution through a minimal \emph{Extract-Validate-Retrieve} formulation.
Specifically, \emph{Extract} proposes text updates based on completed task trajectories;
\emph{Validate} decides which proposals are retrained in the memory bank using held-out tasks;
and \emph{Retrieve} selects the relevant information from memory for future rollouts.
This formulation exposes the central long-horizon risk.
Without selective retention, individually plausible updates can accumulate, consume context and misguide future task-sovling.
As shown in \autoref{fig:drawdown-main}, an ungated agent may improve early yet lose much of its gain as its bank grows, sometimes finishing below its unevolved counterpart.

This formulation leads directly to a text-space optimization problem.
With the LLM policy and solver harness frozen, the objective is to find a persisent memory bank that maximizes expected solve probability on the repository-level task distribution.
Moreover, we further derive theoretical analyses for the corresponding optimization.
First, the positive-margin gate limits the number of accepted bank changes.
Under i.i.d. sampling and stated variance bounds, a disjoint evaluation set provides high-probability certificates for final population gain and peak-to-final drawdown.
Because the validation set is reused and later proposals may adapt to earlier verdicts, we additionally derive a uniform population-loss bound for every accepted update.
Second, we invert the same deviation bounds to determine the required holdout sizes.
For a prescribed tolerance, the valdiation-panel size controls the certificated population loss of accepted updates.

We instantiate this formation in \method{} 
(\textbf{VAL}idation-gated e\textbf{V}olution for ag\textbf{E}nts) for skill optimization.
On each task, the solver LLM samples a group of attempts.
When their outcomes are mixed, the same LLM then contrasts successful and failed trajectories to propose a set of skills.
The corresponding skill set is only accepted to the memory bank if its paired rollout on a fixed validation achieves desired improvement.
An embedding-based retriever is used to independently ranks validated skills for future task solving.
Thus, all tasks arrivals belong to the delopyment horizon, but only validation-approved skills are recorded in the persistent memory.
To validate the effectiveness of our proposed method and theory, we carry out experiments on four SWE-smith repositories using three frontier LLMs. 
Compared with baselines with and without validation, \method{} achieves the best final gain with minimal drawdown after $1000$ tasks arrivals per repository.


Our contributions are:
\begin{itemize}
    \item \textbf{Long-horizon skill optimization for repo-level software engineering.} We
    introduce a pipeline that combines mixed-outcome contrastive extraction,
    atomic validation of joint proposals with paired executable rollouts, and
    selective retrieval of rich-text principles and skills. Over streams of
    approximately $1{,}000$ task arrivals per repository, \method{} achieves both best 
    average final gain and peak gain across three frontier models and four repositories.

   \item \textbf{Theory for validation-gated evolution.}
    We establish finite convergence, provide guarantees on future-task gain and
    long-horizon drawdown, and derive validation and evaluation set size bounds for
    target performance tolerances.

    \item \textbf{Evidence for selective retention.} Validation reduces
    average drawdown from $6.1$ to $1.5$ points while producing an
    $11\times$ smaller memory bank. Component ablations and item-level analysis show
    how extraction, bank structure, and skill representation affect retained
    improvement.
\end{itemize}

\section{Related Work}
\label{sec:related}

\paragraph{Context, experience, and skill optimization.}
A broad line of work studies how agents can improve by optimizing or
accumulating external textual state while keeping model parameters fixed.
Prompt-optimization methods such as APE, ProTeGi, OPRO, EvoPrompt, TextGrad,
and GEPA iteratively propose and evaluate textual updates
\citep{zhou2023ape,pryzant2023protegi,yang2024opro,guo2024evoprompt,
yuksekgonul2024textgrad,agrawal2025gepa}. Context- and experience-based methods
extend this idea beyond a single prompt: ACE and MCE evolve richer context
artifacts \citep{zhang2025ace,ye2026mce}, while Reflexion, ExpeL, AWM,
ReasoningBank, and Voyager distill past trajectories into reusable feedback,
strategies, workflows, or executable knowledge
\citep{shinn2023reflexion,zhao2024expel,wang2024awm,
ouyang2025reasoningbank,wang2023voyager}. Recent work further formalizes and
constructs agent skills from trajectory evidence
\citep{jiang2026agenticskills,li2026skillsbench,ni2026trace2skill,
liu2026skillforge}. Most closely related, SkillOpt treats skill text as
trainable external state and accepts edits using held-out performance
\citep{yang2026skillopt}, while SkillOpt-Lite introduces an independent
validation gate for skill updates \citep{shen2026skilloptlite}. We study the
same external-state optimization paradigm but focus on a retrieved bank of
individual skills under repeated long-horizon updates, including which updates
to retain and whether evolution remains stable over time.

\paragraph{Self-evolving code agents.}
A separate line of work studies self-improvement specifically in software
engineering. SWE-Exp learns experience banks from repair trajectories
\citep{chen2025sweexp}, and Live-SWE-agent adapts its scaffold online
\citep{xia2025livesweagent}. SICA, the Darwin G\"odel Machine, the
Huxley--G\"odel Machine, and ARTEMIS optimize agent implementations, harnesses,
or configurations through execution feedback
\citep{robeyns2025sica,zhang2025dgm,wang2025hgm,brookes2025artemis}, while
Self-play SWE-RL updates model parameters through training
\citep{wei2026selfplayswerl}. In contrast, \method{} keeps both the LLM and
solver harness fixed and isolates skill-bank evolution, allowing us to study
validation, regression, and convergence over long streams of project-level
software-engineering tasks.
\section{\method{}: Validation-Gated Evolution for Agents}
\label{sec:method}
\label{sec:theory}

\begingroup
\setlength{\abovedisplayskip}{4pt}
\setlength{\belowdisplayskip}{4pt}
\setlength{\abovedisplayshortskip}{2pt}
\setlength{\belowdisplayshortskip}{2pt}

We first formulate long-horizon in-context self-evolution as validation-gated
text-space optimization. Based on that, we then provide theoretical guarantees for performance of
future tasks, including final gain and drawdown
(Sec.~\ref{sec:stability}). We further derive the holdout validation
size required for a target drawdown tolerance (Sec.~\ref{sec:validation}). Full proofs are in
Appendix~\ref{app:convergence}.

\subsection{Problem setup and \method{} framework}
\label{sec:formulation}
\label{sec:valve-framework}

\noindent\textbf{Problem setup.}
We consider an agent that sequentially encounters tasks
$x_1,x_2,\ldots$ from a distribution $\mathcal{D}$ while its model weights and
solver harness remain fixed. Given a task $x$ and textual context $c$, the
frozen policy $\pi$ produces a trajectory
$\tau\sim\pi(\cdot\mid x,c)$ with executable success
$s(x,\tau)\in\{0,1\}$. Its solve probability is
$q(x,c):=\mathbb{E}_{\tau\sim\pi(\cdot\mid x,c)}[s(x,\tau)]$.
The evolving state is a persistent memory bank $\mathcal{B}_t$ of textual artifacts
accumulated from previous tasks.
\newcommand{\figFrameworkOverview}{%
\begin{wrapfigure}{r}{0.4\linewidth}
\centering
\includegraphics[width=\linewidth]{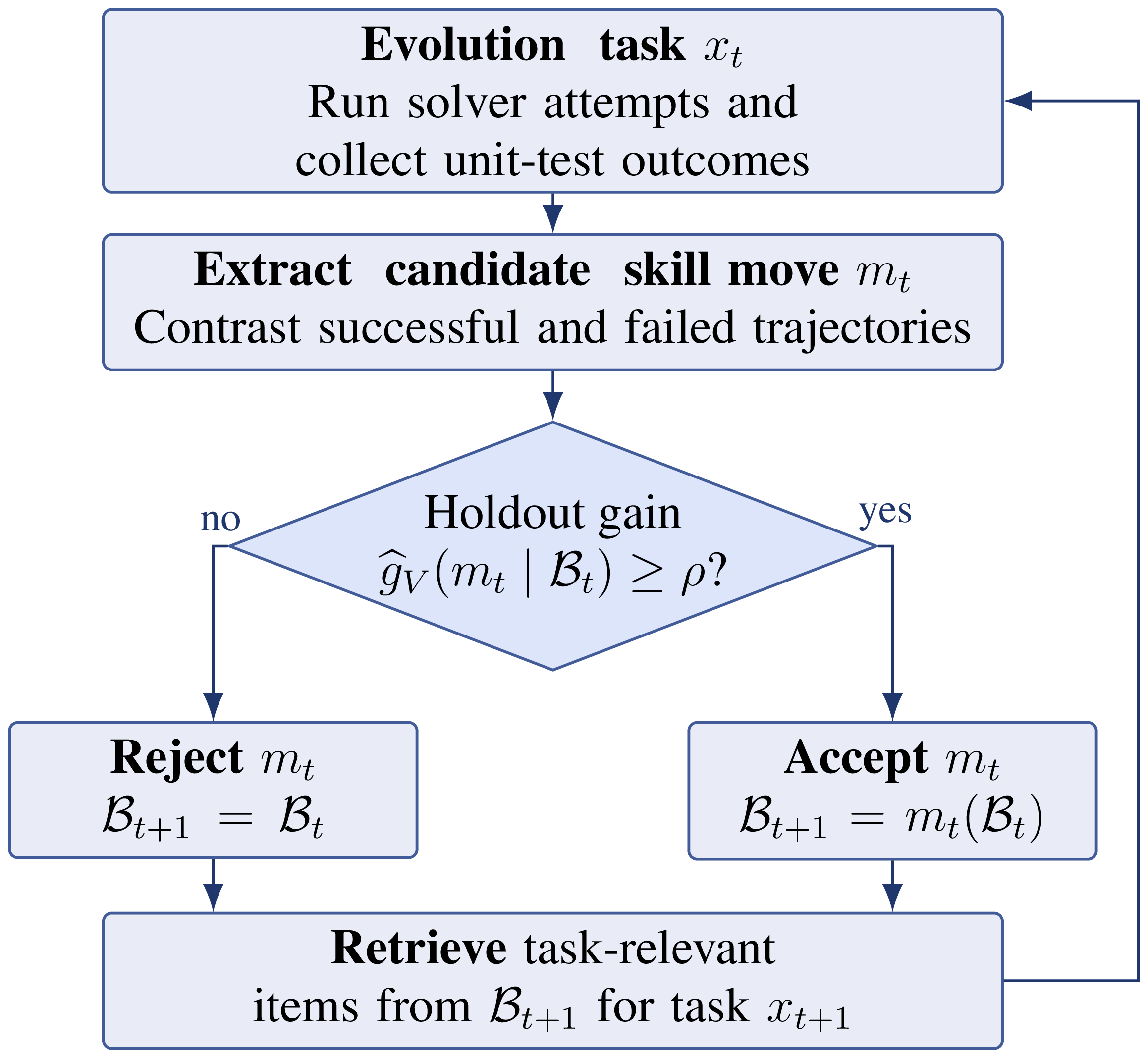}
\caption{Overview of \method{}.}
\label{fig:framework}
\vspace{-1.5\baselineskip}
\end{wrapfigure}}%
\iclronly{\figFrameworkOverview\space}%
Because the full bank may consume the solver's context and even misguide its task solving, a fixed retriever $R$ maps each task and bank into a compact context.
Thus, its population performance is
$F_{\mathcal{D}}(\mathcal{B})
:=\mathbb{E}_{x\sim\mathcal{D}}[q(x,R(x,\mathcal{B}))]$.

\paragraph{A minimal framework for stable evolution.}
\method{} centers evolution around three operators:
\emph{Extract}, \emph{Validate}, and \emph{Retrieve}.
Extract proposes updates from completed experience, Validate decides which updates persist, and
Retrieve maps the persistent bank to a compact task-relevent solver context.
For simplicity, we only consider optimization on Extract and Validate while Retrieve remains fixed.
This minimal interface allows different extraction, validation, memory and retrieval mechanisms to instantiate the same framework.
\autoref{fig:framework} shows the specific instance for skill optimization.

\arxivonly{\figFrameworkOverview}%
\noindent\textbf{Retrieve.}
As Retrieve is fixed in our evolution, we first briefly discuss it.
Before each task, Retrieve selects a compact subset of relevant artifacts from the current
bank. We independently rank principles and skills against the issue and inject the
highest-scoring rich-markdown without re-summarization (Appendix~\ref{app:impl-retrieve}).

\paragraph{Extract.}
On task $x_t$, the solver samples three 
attempts under the same task, repository and retrieved context.
Extract is invoked only when executable outcomes are mixed, enabling a
within-task contrast between successful and failed trajectories. A reflection
call proposes at most three \emph{principles}, which encode transferable
reasoning guidance, and three \emph{skills}, which encode narrow executable
procedures. Exact and near-duplicate items are suppressed before validation
(Appendices~\ref{app:impl-agent}--\ref{app:impl-memory}).

Let $m$ denote a proposed bank update. Its population gain is
$g_{\mathcal{D}}(m\mid\mathcal{B})
=F_{\mathcal{D}}(m(\mathcal{B}))-F_{\mathcal{D}}(\mathcal{B})$.

\paragraph{Validate.}
A lesson that helps its source task may hurt others.
Validate therefore compares the current and candidate banks on a held-out panel
$V=\{x_1,\ldots,x_n\}$ using task-paired executable rollouts.
Let $F_V$ and $g_V$ denote the corresponding panel quantities.
A candidate is committed only when
\begin{equation}
\label{eq:gate}
\mathcal{B}_{t+1}=m_t(\mathcal{B}_t)
\ \ \text{if}\ \
\widehat g_V(m_t\mid\mathcal{B}_t)\ge\rho,
\qquad
\mathcal{B}_{t+1}=\mathcal{B}_t
\ \ \text{otherwise.}
\end{equation}

In our implementation, each proposal jointly updates principles and skills, so
the gate evaluates the complete state transition rather than individual items
(Appendix~\ref{app:impl-validate}).
For banks $\mathcal{B},\mathcal{B}'$ and task $x$, we run $r$ paired rollouts
and record
$\widehat u_{\mathcal{B}',\mathcal{B}}(x)
:=\frac1r\sum_{j\le r}s(x,\tau'_j)
-\frac1r\sum_{j\le r}s(x,\tau_j)\in[-1,1]$.
Finite-rollout noise is absorbed into its variance, bounded by $\sigma^2$ on
$V$ and by $\sigma_{\mathcal{H}}^2$ on a disjoint evaluation set
$\mathcal{H}$.

For an average of $N$ i.i.d.\ summands with variance at most $s^2$ and upper
deviation $Y-\mathbb{E}Y\le M$, the one-sided Bennett radius
$ b_M(N,L,s^2)
:=
\frac{s^2}{M}\,
\mathrm{h}^{-1}\!\Big(\frac{M^2L}{Ns^2}\Big)
\le
\sqrt{\frac{2s^2L}{N}}+\frac{2ML}{3N},
\qquad
\mathrm{h}(v):=(1+v)\ln(1+v)-v$
is exceeded with probability at most $e^{-L}$.

\begin{assumption}[Sample independence]
\label{ass:indep}
For each repository, $\mathcal{D}$ denotes its task distribution. The
validation panel $V$, held-out evaluation set $\mathcal{H}$, and adaptation
stream are disjoint and i.i.d.\ samples from $\mathcal{D}$, and
$\mathcal{H}$ is never used by extraction or validation. Candidates may depend
arbitrarily on previous verdicts from $V$, and $V$ may be reused throughout
evolution.
\end{assumption}

\subsection{Stable evolution and certified performance}
\label{sec:stability}

Here, we show that positive-margin acceptance limits the persistent state from changing indefinitely.

\begin{theorem}[Fixed-panel convergence]
\label{thm:conv-main}
Suppose the panel score of the current bank is carried forward rather than
re-estimated after a rejection, and let $\widehat F_t$ denote the panel score
of $\mathcal{B}_t$ used by \autoref{eq:gate}. Then $\widehat F_t$ is
non-decreasing, $\widehat F_T=\max_t\widehat F_t$, and the number $K$ of
accepted moves satisfies
$
\label{eq:count}
K
\le
\left\lfloor\frac{1-\widehat F_0}{\rho}\right\rfloor
\le
\left\lfloor\frac1\rho\right\rfloor .
$
The run therefore visits at most $K+1$ distinct banks.
\end{theorem}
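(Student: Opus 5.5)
The argument is a deterministic potential-function count on the fixed panel $V$; no probabilistic input such as the Bennett radius or \autoref{ass:indep} is needed. First I will make the carried-forward bookkeeping explicit. Set $\widehat F_0 := F_V(\mathcal{B}_0)$, the panel score recorded for the initial bank. At step $t$, the gate in \autoref{eq:gate} evaluates a candidate $m_t(\mathcal{B}_t)$, producing a panel score $\widehat F'_t$ and the paired estimate $\widehat g_V(m_t\mid\mathcal{B}_t)$. Under the carried-forward convention, this paired estimate equals $\widehat F'_t-\widehat F_t$. The score update is then: if the candidate is accepted, $\widehat F_{t+1}=\widehat F'_t$; if it is rejected, $\widehat F_{t+1}=\widehat F_t$.

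The key point to state carefully is that the gain compared against $\rho$ must telescope against the same recorded score $\widehat F_t$. It must not be compared against a fresh re-estimate of the incumbent bank. This is exactly why the hypothesis is needed: with re-estimation, rollout noise could let the score drift down across rejections and break monotonicity. I expect this identification, that the paired statistic is the difference of carried scores, to be the only delicate step. I will treat it as the content of the hypothesis: the panel score of the incumbent is stored and reused as the baseline.

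With that in place, each step satisfies either $\widehat F_{t+1}-\widehat F_t\ge\rho>0$ (acceptance) or $\widehat F_{t+1}=\widehat F_t$ (rejection). This gives monotonicity directly, and hence $\widehat F_T=\max_{t\le T}\widehat F_t$. Telescoping over the $K$ accepted steps gives
\[
\widehat F_T-\widehat F_0=\sum_{t<T}(\widehat F_{t+1}-\widehat F_t)\ge K\rho .
\]
Each panel score is an average of empirical success rates in $\{0,1/r,\dots,1\}$, so $\widehat F_T\le 1$. Therefore $K\rho\le 1-\widehat F_0$, and since $K$ is an integer, $K\le\lfloor(1-\widehat F_0)/\rho\rfloor$. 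Because $\widehat F_0\ge 0$, this is at most $\lfloor 1/\rho\rfloor$.

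Finally, the bank changes only at accepted steps, so the trajectory $\mathcal{B}_0,\mathcal{B}_1,\dots$ is piecewise constant with at most $K$ switch points. It therefore takes at most $K+1$ distinct values. In fact these values are pairwise distinct as recorded states, since their carried scores are strictly increasing. This proves the bound for every horizon $T$, including $T\to\infty$, which gives finite convergence.
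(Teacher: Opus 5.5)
Your proof is correct and follows the paper's argument essentially step for step. Under carry-forward, a rejection leaves $\widehat F_t$ unchanged and an acceptance raises it by exactly $\widehat g_V(m_t\mid\mathcal{B}_t)\ge\rho$; telescoping over the $K$ accepted rounds against $\widehat F_t\le 1$ then gives $K\rho\le 1-\widehat F_0$, the bank changes only at accepted rounds, and your explicit identification of the gate statistic as a difference of carried scores is just the step $\widehat F_{t+1}-\widehat F_t=\widehat g_V(m_t\mid\mathcal{B}_t)$ that the paper uses implicitly.
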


The result follows directly from positive-margin ascent on a score bounded in
$[0,1]$ and holds for arbitrary history-dependent proposals and reuse of $V$.
If improvement is localized to tasks whose baseline outcomes are unstable,
with fraction $h$, the bound tightens to
$K\le\lfloor h/\rho\rfloor$
(Appendix~\ref{app:preflight-proof}).

As validation convergence does not imply population improvement,
we further evaluate frozen checkpoints on a disjoint set $\mathcal{H}$.
Let $\mathcal{B}^{(0)},\ldots,\mathcal{B}^{(J)}$ be the checkpoints,
$\mathcal{B}^{(J)}$ the final bank, and $A$ the number of distinct banks.
Denote $\widehat\Delta$ for the measured final gain over
$\mathcal{B}^{(0)}$, and let
$
\label{eq:dd}
\mathrm{DD}
:=
\max_j\big[
F_{\mathcal{D}}(\mathcal{B}^{(j)})
-F_{\mathcal{D}}(\mathcal{B}^{(J)})
\big],
\widehat{\mathrm{DD}}
:=
\max_j\widehat D_j .
$

\begin{theorem}[Certified gain and drawdown]
\label{thm:cert-main}
Under Assumption~\ref{ass:indep}, with probability at least $1-\delta$,
\begin{align}
F_{\mathcal{D}}(\mathcal{B}^{(J)})
-F_{\mathcal{D}}(\mathcal{B}^{(0)})
&\ge
\widehat\Delta
-b_1\!\left(
n_{\mathcal{H}},
\ln\tfrac1\delta,
\sigma_{\mathcal{H}}^2
\right),
\label{eq:gaincert}\\[-1pt]
\mathrm{DD}
&\le
\widehat{\mathrm{DD}}
+b_1\!\left(
n_{\mathcal{H}},
\ln\tfrac{A}{\delta},
\sigma_{\mathcal{H}}^2
\right),
\label{eq:ddcert}
\end{align}
where \autoref{eq:gaincert} additionally requires its right-hand side to be
nonnegative.
\end{theorem}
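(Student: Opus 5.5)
The plan is to condition on the entire evolution run, meaning everything generated from the adaptation stream and the validation panel $V$. By Assumption~\ref{ass:indep}, $\mathcal{H}$ is disjoint from both, never used by extraction or validation, and i.i.d.\ from $\mathcal{D}$. So, conditionally on the run, the checkpoints $\mathcal{B}^{(0)},\ldots,\mathcal{B}^{(J)}$ are fixed banks independent of $\mathcal{H}$. The adaptivity of proposals to earlier $V$-verdicts therefore plays no role here; it is handled separately in the uniform population-loss bound.

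\textbf{Gain bound.} I take $\widehat\Delta=\frac1{n_{\mathcal{H}}}\sum_{x\in\mathcal{H}}\widehat u_{\mathcal{B}^{(J)},\mathcal{B}^{(0)}}(x)$, computed with task-paired rollouts. Each summand is an i.i.d.\ draw, with the randomness coming from $x\sim\mathcal{D}$ and the rollouts. By the tower property, its mean is $\mathbb{E}_x[q(x,R(x,\mathcal{B}^{(J)}))-q(x,R(x,\mathcal{B}^{(0)}))]=F_{\mathcal{D}}(\mathcal{B}^{(J)})-F_{\mathcal{D}}(\mathcal{B}^{(0)})$. Its variance is at most $\sigma_{\mathcal{H}}^2$, with finite-rollout noise absorbed as stated.

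We need a lower bound on the mean, i.e.\ control of the upper deviation of $Y=\widehat u$. Since $\widehat u\in[-1,1]$ and the mean lies in $[-1,1]$, one only gets $Y-\mathbb{E}Y\le 1-\mathbb{E}Y$. This is at most $1$ exactly when $\mathbb{E}Y\ge0$, which is where the side condition comes in. I would argue on the event where the conclusion fails: then $F_{\mathcal{D}}(\mathcal{B}^{(J)})-F_{\mathcal{D}}(\mathcal{B}^{(0)})<\widehat\Delta-b_1\le$ RHS, and the hypothesis RHS $\ge0$ forces the relevant regime. The cleanest route is to split on the sign of the true gain. If the true gain is $\ge0$, then $M=1$ is valid and the one-sided Bennett bound gives failure probability $\le\delta$ with $L=\ln(1/\delta)$. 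If the true gain is $<0$, then the conclusion holds trivially whenever it holds, so failure requires $\widehat\Delta-b_1>$ the true gain. Here I would instead apply Bennett to the shifted variable with the range bound handled by monotonicity of $b_M$ in $M$, or more simply note that the failure event is contained in $\{\widehat\Delta-\mathbb{E}\widehat\Delta> b_1\}$ with $\mathbb{E}\widehat\Delta<0\le$ RHS. I expect this range/sign bookkeeping to be the main obstacle, and I would first try to show the nonnegativity condition is exactly what licenses $M=1$.

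\textbf{Drawdown bound.} Write $\widehat D_j$ for the $\mathcal{H}$-average of $\widehat u_{\mathcal{B}^{(j)},\mathcal{B}^{(J)}}$, whose mean is $D_j:=F_{\mathcal{D}}(\mathcal{B}^{(j)})-F_{\mathcal{D}}(\mathcal{B}^{(J)})$. Here we need an upper bound on the mean, i.e.\ control of $\mathbb{E}Y-Y$. Apply Bennett to $-Y$, whose upper deviation is $\le 1+D_j$. Since $D_J=0$ makes the max nonnegative, it suffices to handle the $j$ with $D_j\ge0$. For those, I would replace $-Y$ by the $[-1,1]$ variable and use $M=1$, obtaining the bound via the same sign argument as above.

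Checkpoints that share a bank give identical $\widehat D_j$, so the union bound runs over only the $A$ distinct banks, each at level $\delta/A$, i.e.\ $L=\ln(A/\delta)$. On the good event, $D_j\le\widehat D_j+b_1\le\widehat{\mathrm{DD}}+b_1$ for every $j$; taking the max gives \eqref{eq:ddcert}. Finally, to obtain a joint statement at level $1-\delta$ rather than $1-2\delta$, I would either note that the two bounds are stated separately, or absorb the gain event into the union, since the pair $(\mathcal{B}^{(J)},\mathcal{B}^{(0)})$ is one of the compared pairs.
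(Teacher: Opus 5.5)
Your reduction is the paper's. You condition on everything except $\mathcal{H}$, so the checkpoint pairs are fixed. The paired summands are then i.i.d.\ with the right means, and you union-bound over the $A$ distinct banks, using $D_J=0$.

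\textbf{Gain side: the case $\nu<0$ is left open.} Here $\nu:=F_{\mathcal{D}}(\mathcal{B}^{(J)})-F_{\mathcal{D}}(\mathcal{B}^{(0)})$, and neither of your proposed fixes works.
\begin{itemize}
\item $b_M$ is \emph{increasing} in $M$: at fixed deviation $t$, the exponent $\frac{Ns^2}{M^2}\mathrm{h}(Mt/s^2)$ strictly decreases in $M$. So monotonicity in $M$ pushes the wrong way.
\item Bounding the failure event by $\{\widehat\Delta-\nu>b_1\}$ throws away exactly the slack you need. With range $1+|\nu|$, that event can have probability above $\delta$.
\end{itemize}
Your closing instinct is right, but it needs an argument, which is the paper's Remark~\ref{rem:range}. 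The side condition is a \emph{fixed} threshold $0$ separating $\nu$ from the certified floor. So failure with $\nu<0$ means $\widehat\Delta\ge b_1$, i.e.\ a deviation of at least $b_1+|\nu|$ against range $1+|\nu|$. It then suffices that $a\mapsto\frac{Ns^2}{(1+a)^2}\mathrm{h}\big(\frac{(1+a)(b_1+a)}{s^2}\big)$ is nondecreasing on $a\ge0$. Then $a=0$, i.e.\ $M=1$, is the binding case.

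The paper verifies this derivative condition numerically, but it holds in general. Set $M=1+a$ and $v=M(b_1+a)/s^2$. The derivative has the sign of $(M^2/s^2-v-2)\ln(1+v)+2v$. Moreover $M^2/s^2-v=M(1-b_1)/s^2\ge0$ whenever $b_1\le1$; if $b_1>1$ the side condition can never hold. So the expression is at least $2v-2\ln(1+v)\ge0$.

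\textbf{Drawdown side: your sign argument runs backwards, and the step fails.} The indices you keep, $D_j>0$, are exactly those where $-Y$ has upper range $D_j-Y\le1+D_j>1$. So $M=1$ is not admissible there. Nothing forces extra deviation either: $\widehat{\mathrm{DD}}\ge0$ only yields $D_j>b_1$, while the failure event is still just $\{D_j-\widehat D_j>b_1\}$.

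The compensation in Remark~\ref{rem:range} needs a fixed threshold separating the truth from the certified bound, as $0$ does for the gain. Here $0$ lies below both $\widehat{\mathrm{DD}}+b_1$ and $D_j$. Bennett therefore gives only the per-index radius $b_{1+D_j}\le b_2$, not $b_1$.

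The paper's proof makes the same move (``$M=1$ by Remark~\ref{rem:range} with $c=0$''), so here you follow the paper. The step is not justified as written in either argument. What is actually established is \eqref{eq:ddcert} with the remark's conservative range $M=2$, or with the per-index radius $b_{1+D_j}$.

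\textbf{Joint probability.} The paper proves the two displays as separate $1-\delta$ statements. Folding the gain event into the drawdown union, as you suggest, would give joint coverage only with the gain radius inflated to $b_1(n_{\mathcal{H}},\ln\tfrac{A}{\delta},\sigma_{\mathcal{H}}^2)$.
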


Thus a measured held-out gain certifies expected improvement on future evaluation tasks drawn from $\mathcal{D}$, while \eqref{eq:ddcert} controls peak-to-final population drawdown.
Since Theorem~\ref{thm:conv-main} gives $A\le K+1$, the drawdown budget depends on accepted changes rather than the total task horizon.
A paired version also certifies final separation between gated and
ungated runs (Theorem~\ref{thm:sep-main},
Appendix~\ref{app:separation-proof}).

\subsection{Validation reliability and holdout sizing}
\label{sec:validation}

Because earlier verdicts from the same validation panel can affect later updates, we thus further bound the population loss of adaptive acceptance and then derive the required validation and evaluation set sizes.

\paragraph{Validation reliability.}
For an accepted move, define its population loss as
$\ell_{\mathcal{D}}(m\mid\mathcal{B})
:=[-g_{\mathcal{D}}(m\mid\mathcal{B})]_+$.
Because only accepted moves change the bank, only finite states are reachable under adaptive panel reuse. Conditioning on randomness other than
$V$, the number of bank-move pairs that may produce the $k$-th acceptance
is at most
$
\label{eq:Nk}
N_k:=\sum_{t=1}^{T}\binom{t-1}{k-1},
$
where $T$ is the gate-query horizon fixed in advance.

\begin{theorem}[Adaptive-validation population-loss guarantee]
\label{thm:reuse-main}
Under Assumption~\ref{ass:indep}, let
$\delta_k:=\delta/(k(k+1))$ and
$L_k:=\ln(N_k/\delta_k)$. Then with probability at least $1-\delta$, the
$k$-th accepted move satisfies, simultaneously for every $k$ and every
$\gamma>0$ for which the right-hand side is at least $-\gamma$,

\begin{equation}
\label{eq:reusecert}
g_{\mathcal{D}}(m\mid\mathcal{B})
\ge
\widehat g_V(m\mid\mathcal{B})
-b_{1+\gamma}(n,L_k,\sigma^2)
\ge
\rho-b_{1+\gamma}(n,L_k,\sigma^2).
\end{equation}
\end{theorem}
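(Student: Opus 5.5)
The approach is to reduce adaptive reuse of $V$ to a union bound over a finite, pre-enumerable family of hypotheses, and then apply the one-sided Bennett radius to each fixed hypothesis.

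First I condition on all randomness other than $V$: the adaptation stream, the extraction/proposal randomness, and the rollout noise outside the validation comparisons. After this conditioning, the whole run is a deterministic function of the sequence of gate verdicts. Since a rejection leaves the bank unchanged, the state before query $t$ is determined by which of the earlier queries were accepted. Therefore the bank-move pair $(\mathcal{B},m)$ that could be the $k$-th acceptance, occurring at query $t$, is determined by choosing the $k-1$ earlier accepting queries among the first $t-1$. Summing over $t\le T$ gives at most $N_k=\sum_t\binom{t-1}{k-1}$ candidate pairs $\mathcal{C}_k$. The key point is that this family is fixed before $V$ is drawn, so for each $(\mathcal{B},m)\in\mathcal{C}_k$ the paired differences $\widehat u_{m(\mathcal{B}),\mathcal{B}}(x_i)$, $i\le n$, are i.i.d.\ by Assumption~\ref{ass:indep}. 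Their mean is $g_{\mathcal{D}}(m\mid\mathcal{B})$, because the finite-rollout noise is absorbed and $\mathbb{E}\,\widehat u=q(x,R(x,m(\mathcal{B})))-q(x,R(x,\mathcal{B}))$. Their variance is at most $\sigma^2$.

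Next I apply the Bennett bound to the upper deviation $Y-\mathbb{E}Y$, where $Y=\widehat u\in[-1,1]$. This is the step that needs care, and I expect it to be the main obstacle. The trivial range gives $M=1-g_{\mathcal{D}}\le 2$. The paper's $M=1+\gamma$ uses that $g_{\mathcal{D}}\ge-\gamma$, so $M\le1+\gamma$. Since $b_M$ is increasing in $M$, I can use $b_{1+\gamma}$ whenever $g_{\mathcal{D}}(m\mid\mathcal{B})\ge-\gamma$. The condition stated in the theorem is only that the right-hand side is at least $-\gamma$. So I split into two cases.

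\emph{Case $g_{\mathcal{D}}\ge-\gamma$.} Here $M\le1+\gamma$ holds, so the Bennett bound applies directly and yields $\widehat g_V-g_{\mathcal{D}}\le b_{1+\gamma}(n,L_k,\sigma^2)$ with probability at least $1-e^{-L_k}$.

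\emph{Case $g_{\mathcal{D}}<-\gamma$.} Here the conclusion $g_{\mathcal{D}}\ge \mathrm{RHS}\ge-\gamma$ would fail, so I must rule this case out. I would first try to do so by monotonicity: define $\gamma^\*=-g_{\mathcal{D}}>\gamma$ and apply the bound with $M=1+\gamma^\*$. This gives $g_{\mathcal{D}}\ge\widehat g_V-b_{1+\gamma^\*}$. I would then show this contradicts $\widehat g_V-b_{1+\gamma}\ge-\gamma$. Because the event is defined per fixed pair and per deviation level, I would handle $\gamma$ uniformly by stating the bad event in the $\gamma$-free form $\{\widehat g_V-g_{\mathcal{D}}>b_{1-g_{\mathcal{D}}}(n,L_k,\sigma^2)\}$. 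This is a single event per pair, with probability at most $e^{-L_k}$. On its complement the inequality holds with $M=1-g_{\mathcal{D}}$, and replacing $M$ by $1+\gamma\ge1-g_{\mathcal{D}}$ (case $g_{\mathcal{D}}\ge-\gamma$) only weakens it. If the monotonicity argument for the other case turns out delicate, I would instead simply use $M=2$ there, which requires only $g_{\mathcal{D}}\ge-1$, always true.

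Finally I union-bound. For fixed $k$, the failure probability over $\mathcal{C}_k$ is at most $N_k e^{-L_k}=\delta_k$. Summing over $k$ gives $\sum_k\delta/(k(k+1))=\delta$. On the good event, the realized $k$-th accepted pair lies in $\mathcal{C}_k$, so $g_{\mathcal{D}}\ge\widehat g_V-b_{1+\gamma}$ holds simultaneously for every $k$ and every admissible $\gamma$. Acceptance under \autoref{eq:gate} gives $\widehat g_V\ge\rho$, which is the second inequality. Since the bound holds conditionally for every value of the non-$V$ randomness, integrating over that randomness gives the unconditional statement.
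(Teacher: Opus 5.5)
Your skeleton is the paper's proof: you condition on the non-panel randomness $\xi$, count the at most $N_k$ candidate pairs through the accept prefix $S\cap[t-1]$ (the paper's Lemma~\ref{lem:compress}), and apply Lemma~\ref{lem:bennett} to each $\xi$-measurable pair. You then union-bound with $\delta_k$, sum $\sum_k\delta_k=\delta$, and use the gate for the second inequality. You depart from the paper only in how you justify the range constant. The paper invokes Remark~\ref{rem:range} to use $M=1+\gamma$, and that remark rests on condition~(\ref{eq:effective-range-condition}), which is only checked numerically for the reported configurations (with $M=2$ as the fallback). Your device is a single $\gamma$-free bad event $\{\widehat g_V-g_{\mathcal{D}}>b_{1-g_{\mathcal{D}}}(n,L_k,\sigma^2)\}$ per pair, using the exact upper range $1-g_{\mathcal{D}}$. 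It is cleaner, and it makes the ``every $\gamma$ simultaneously'' clause automatic, which the paper does not spell out.

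However, you leave the decisive step unproved. In the case $g_{\mathcal{D}}<-\gamma$ you only say you ``would show'' a contradiction. Your fallback with $M=2$ need not contradict the hypothesis, because $b_2>b_{1+\gamma}$ for $\gamma<1$; at best it gives the theorem with $b_2$ in place of $b_{1+\gamma}$. What you need is that $\phi(x):=b_{1+x}(n,L_k,\sigma^2)-x$ is strictly decreasing on $[\gamma,\infty)$. Then on your good event, $g_{\mathcal{D}}=-x^\ast$ with $x^\ast>\gamma$ gives $\widehat g_V\le\phi(x^\ast)<\phi(\gamma)$, i.e.\ $\widehat g_V-b_{1+\gamma}<-\gamma$, contradicting the hypothesis.

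This monotonicity is true and short to prove:
\begin{itemize}
\item With $w=M^2L_k/(n\sigma^2)$ and $v=\mathrm{h}^{-1}(w)$, one computes $\partial_M b_M=\frac{\sigma^2}{M^2}\big(\frac{2\mathrm{h}(v)}{\ln(1+v)}-v\big)<\frac{b_M}{M}$, because $\mathrm{h}(v)<v\ln(1+v)$ for $v>0$.
\item The ratio $b_M/M=\frac{L_k}{n}\cdot\frac{\mathrm{h}^{-1}(w)}{w}$ is nonincreasing in $M$ by concavity of $\mathrm{h}^{-1}$.
\item The hypothesis itself forces $b_{1+\gamma}\le\widehat g_V+\gamma\le1+\gamma$.
\end{itemize}
Hence $\phi'(x)<b_{1+x}/(1+x)-1\le b_{1+\gamma}/(1+\gamma)-1\le0$ for $x\ge\gamma$. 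With this lemma inserted, your argument proves the theorem as stated without any numerical verification, which is slightly more than the paper's route delivers.
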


The guarantee is anchored at the measured validation gain and remains valid
despite adaptive reuse of the same panel $V$.

\paragraph{Holdout sizing.}
All precedding certificates have the measured quantity where the radius decreases with the holdout size.
Inverting this radius therefore gives the number of held-out tasks required for
a target tolerance.

\begin{corollary}[Holdout sizing]
\label{cor:sizing}
Fix $\gamma>0$ and write $a:=2s^2L$ and $c:=2ML/3$. Then
$b_M(N,L,s^2)\le\gamma$ whenever

\begin{equation}
\label{eq:sizing}
N
\ge
\left(
\frac{2c}{\sqrt{a+4c\gamma}-\sqrt{a}}
\right)^{\!2}
=
\frac{a}{\gamma^2}\big(1+O(\gamma)\big).
\end{equation}
\end{corollary}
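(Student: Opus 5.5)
We use the upper bound on the Bennett radius stated with its definition, $b_M(N,L,s^2)\le\sqrt{2s^2L/N}+2ML/(3N)$, so it suffices to force this simpler quantity below $\gamma$. Put $u:=1/\sqrt N>0$. With $a=2s^2L$ and $c=2ML/3$, the sufficient condition reads
\[
\sqrt{a}\,u+c\,u^2\le\gamma .
\]
This is a quadratic inequality in $u$. The left side is increasing in $u$ on $u\ge0$, so the condition holds exactly when $u$ is at most the positive root,
\[
u_\ast=\frac{-\sqrt a+\sqrt{a+4c\gamma}}{2c}.
\]
Thus $N\ge 1/u_\ast^2=\bigl(2c/(\sqrt{a+4c\gamma}-\sqrt a)\bigr)^2$ is sufficient, which is the displayed bound. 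We assume $c>0$, which holds since $M,L>0$. If $c=0$, the condition reduces to $N\ge a/\gamma^2$ directly.

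For the asymptotic form, rationalize the root:
\[
\frac{2c}{\sqrt{a+4c\gamma}-\sqrt a}=\frac{\sqrt{a+4c\gamma}+\sqrt a}{2\gamma}.
\]
Squaring gives
\[
N_{\min}=\frac{\bigl(\sqrt{a+4c\gamma}+\sqrt a\bigr)^2}{4\gamma^2}
=\frac{a}{\gamma^2}\cdot\frac{\bigl(\sqrt{1+4c\gamma/a}+1\bigr)^2}{4}.
\]
Expanding $\sqrt{1+x}=1+x/2+O(x^2)$ with $x=4c\gamma/a$, the second factor becomes $(2+2c\gamma/a+O(\gamma^2))^2/4=1+2c\gamma/a+O(\gamma^2)$. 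This is $1+O(\gamma)$ for fixed $a,c$.

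No step is a genuine obstacle. The only points needing care are these:
\begin{itemize}
\item Monotonicity of $\sqrt a\,u+cu^2$ on $u\ge0$, which justifies replacing the inequality by $u\le u_\ast$.
\item Noting that the $O(\gamma)$ constant depends on $c/a=M/(3s^2)$, so the leading-order claim is meant for fixed $M$ and $s^2$ as $\gamma\to0$.
\end{itemize}
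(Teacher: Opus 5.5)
Your proposal is correct and is essentially the paper's proof: relax $b_M$ to $\sqrt{a/N}+c/N$, substitute $u=N^{-1/2}$, and use monotonicity of $cu^2+\sqrt a\,u-\gamma$ on $u>0$ to reduce the condition to $u\le u^\star$. Rationalizing $\frac{2c}{\sqrt{a+4c\gamma}-\sqrt a}=\frac{\sqrt{a+4c\gamma}+\sqrt a}{2\gamma}$ to get the $1+O(\gamma)$ form is only a cosmetic variant of the paper's direct Taylor expansion of $u^\star$.
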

The corollary applies to the two set sizing separately.
Setting $L=\ln(A/\delta)$ and $M=1$ sizes the evaluation set for a prescribed
drawdown slack $\gamma$.
For the validation panel, using $L=L_K$, $M=1+\gamma$, and replacing
$\gamma$ by $\widehat g_V+\gamma$ ensures that every
accepted update in a run with at most $K$ accepts has population loss at most
$\gamma$.

The bank count satisfies $A\le\lfloor1/\rho\rfloor+1$ by
Theorem~\ref{thm:conv-main}. Baseline rollouts provide a variance proxy.
If $p(x)$ is the baseline solve probability, Assumption~\ref{ass:local} gives
$\sigma^2\le(1+\tfrac2r)\mathbb{E}_x[p(x)(1-p(x))]
+2\varepsilon_{\mathrm{var}}$, whose leading term is estimated by
$
\label{eq:vmass}
\mathrm{vmass}
:=
\frac1N\sum_{i=1}^N
\widehat p_i(1-\widehat p_i)\frac{r}{r-1}.
$
One can therefore estimate the variance from baseline rollouts, choose
$\rho$, $\gamma$, and a conservative residual allowance
$\varepsilon_{\mathrm{var}}$, and obtain both holdout sizes from
\autoref{eq:sizing}. Numerical examples are provided in
Table~\ref{tab:sizing}, Figure~\ref{fig:sizing}, and
Appendix~\ref{app:holdout-choice}.

\endgroup
\section{Experiments}
\label{sec:experiments}

\subsection{Experimental setup}
\label{sec:exp-setup}

We implement \method{} using mini-SWE-agent \citep{minisweagent2026,yang2024sweagent}, where the solver,
extractor, and consumer all are based on the same backbone, using four SWE-smith
repositories \citep{yang2025swesmith}: \texttt{pygments}, \texttt{sqlfluff},
\texttt{pydicom}, and \texttt{deepdiff}. Each repository is split into disjoint
partitions (Appendix~\ref{app:impl-reference}): a $100$-task evaluation set, a
fixed $50$-task validation panel reused for every acceptance decision, and an
evolution stream of approximately $1{,}000$ tasks with at most $100$
mixed-outcome evolution steps. All conditions use three rollouts per task,
rich-markdown banks, and the acceptance threshold $\rho=0.02$.

\begin{table}[t]
\centering
\footnotesize
\renewcommand{\arraystretch}{1.08}

\begin{tabular}{@{}lccccc@{}}
\toprule
& \multicolumn{5}{c}{\textbf{Final resolved rate (avg@3, \%)}} \\
\cmidrule(lr){2-6}
\textbf{Setting}
& \textbf{\texttt{pygments}}
& \textbf{\texttt{sqlfluff}}
& \textbf{\texttt{deepdiff}}
& \textbf{\texttt{pydicom}}
& \textbf{Model Avg.} \\
\midrule

\multicolumn{6}{@{}l}{\textbf{MiniMax-M2.7}} \\

Base
& $52.5$
& $66.5$
& $56.3$
& $60.3$
& $58.9$ \\

SkillOpt
& $60.0\ {\color{gainpos}(+7.5)}$
& $66.0\ {\color{gainneg}(-0.5)}$
& $\mathbf{69.0}\ {\color{gainpos}\mathbf{(+12.7)}}$
& $60.7\ {\color{gainpos}(+0.4)}$
& $63.9\ {\color{gainpos}(+5.0)}$ \\

\method{} w/o val
& $53.3\ {\color{gainpos}(+0.8)}$
& $66.7\ {\color{gainpos}(+0.2)}$
& $61.7\ {\color{gainpos}(+5.4)}$
& $58.7\ {\color{gainneg}(-1.6)}$
& $60.1\ {\color{gainpos}(+1.2)}$ \\

\cellcolor{locushl!55}\method{}
& \cellcolor{locushl!55}$\mathbf{61.0}\ {\color{gainpos}\mathbf{(+8.5)}}$
& \cellcolor{locushl!55}$\mathbf{72.7}\ {\color{gainpos}\mathbf{(+6.2)}}$
& \cellcolor{locushl!55}$65.3\ {\color{gainpos}(+9.0)}$
& \cellcolor{locushl!55}$\mathbf{61.7}\ {\color{gainpos}\mathbf{(+1.4)}}$
& \cellcolor{locushl!55}$\mathbf{65.2}\ {\color{gainpos}\mathbf{(+6.3)}}$ \\

\addlinespace[3pt]

\multicolumn{6}{@{}l}{\textbf{GPT-5.5}} \\

Base
& $59.3$
& $68.7$
& $64.7$
& $38.2$
& $57.7$ \\

SkillOpt
& $58.3\ {\color{gainneg}(-1.0)}$
& $72.0\ {\color{gainpos}(+3.3)}$
& $68.7\ {\color{gainpos}(+4.0)}$
& $56.7\ {\color{gainpos}(+18.5)}$
& $63.9\ {\color{gainpos}(+6.2)}$ \\

\method{} w/o val
& $60.3\ {\color{gainpos}(+1.0)}$
& $69.7\ {\color{gainpos}(+1.0)}$
& $65.7\ {\color{gainpos}(+1.0)}$
& $68.0\ {\color{gainpos}(+29.8)}$
& $65.9\ {\color{gainpos}(+8.2)}$ \\

\cellcolor{locushl!55}\method{}
& \cellcolor{locushl!55}$\mathbf{62.3}\ {\color{gainpos}\mathbf{(+3.0)}}$
& \cellcolor{locushl!55}$\mathbf{83.3}\ {\color{gainpos}\mathbf{(+14.6)}}$
& \cellcolor{locushl!55}$\mathbf{85.0}\ {\color{gainpos}\mathbf{(+20.3)}}$
& \cellcolor{locushl!55}$\mathbf{76.3}\ {\color{gainpos}\mathbf{(+38.1)}}$
& \cellcolor{locushl!55}$\mathbf{76.7}\ {\color{gainpos}\mathbf{(+19.0)}}$ \\

\addlinespace[3pt]

\multicolumn{6}{@{}l}{\textbf{Claude-4.6}} \\

Base
& $78.5$
& $79.3$
& $71.7$
& $71.3$
& $75.2$ \\

SkillOpt
& $74.7\ {\color{gainneg}(-3.8)}$
& $88.7\ {\color{gainpos}(+9.4)}$
& $80.0\ {\color{gainpos}(+8.3)}$
& $\mathbf{98.0}\ {\color{gainpos}\mathbf{(+26.7)}}$
& $85.3\ {\color{gainpos}(+10.2)}$ \\

\method{} w/o val
& $80.0\ {\color{gainpos}(+1.5)}$
& $62.0\ {\color{gainneg}(-17.3)}$
& $67.3\ {\color{gainneg}(-4.4)}$
& $65.3\ {\color{gainneg}(-6.0)}$
& $68.7\ {\color{gainneg}(-6.6)}$ \\

\cellcolor{locushl!55}\method{}
& \cellcolor{locushl!55}$\mathbf{92.3}\ {\color{gainpos}\mathbf{(+13.8)}}$
& \cellcolor{locushl!55}$\mathbf{92.3}\ {\color{gainpos}\mathbf{(+13.0)}}$
& \cellcolor{locushl!55}$\mathbf{97.7}\ {\color{gainpos}\mathbf{(+26.0)}}$
& \cellcolor{locushl!55}$96.7\ {\color{gainpos}(+25.4)}$
& \cellcolor{locushl!55}$\mathbf{94.8}\ {\color{gainpos}\mathbf{(+19.6)}}$ \\

\midrule

\multicolumn{6}{@{}l}{\textbf{Repo average}} \\

Base
& $63.4$
& $71.5$
& $64.2$
& $56.6$
& $63.9$ \\

SkillOpt
& $64.3\ {\color{gainpos}(+0.9)}$
& $75.6\ {\color{gainpos}(+4.1)}$
& $72.6\ {\color{gainpos}(+8.3)}$
& $71.8\ {\color{gainpos}(+15.2)}$
& $71.1\ {\color{gainpos}(+7.1)}$ \\

\method{} w/o val
& $64.6\ {\color{gainpos}(+1.1)}$
& $66.1\ {\color{gainneg}(-5.4)}$
& $64.9\ {\color{gainpos}(+0.7)}$
& $64.0\ {\color{gainpos}(+7.4)}$
& $64.9\ {\color{gainpos}(+0.9)}$ \\

\cellcolor{locushl!55}\method{}
& \cellcolor{locushl!55}$\mathbf{71.9}\ {\color{gainpos}\mathbf{(+8.4)}}$
& \cellcolor{locushl!55}$\mathbf{82.8}\ {\color{gainpos}\mathbf{(+11.3)}}$
& \cellcolor{locushl!55}$\mathbf{82.7}\ {\color{gainpos}\mathbf{(+18.4)}}$
& \cellcolor{locushl!55}$\mathbf{78.2}\ {\color{gainpos}\mathbf{(+21.6)}}$
& \cellcolor{locushl!55}$\mathbf{78.9}\ {\color{gainpos}\mathbf{(+14.9)}}$ \\

\bottomrule
\end{tabular}

\caption{
Final performance after long-horizon SWE-smith repository-level evolution.
Parentheses show gains over the corresponding base model; best results are bold.
}
\label{tab:main}
\end{table}
\begin{wraptable}{r}{0.43\linewidth}
  \centering
  \vspace{-0.5\baselineskip}
  \footnotesize
  \setlength{\tabcolsep}{2.0pt}
  \renewcommand{\arraystretch}{1.05}

  \begin{tabular}{@{}lrrr@{}}
  \toprule
  \textbf{Setting}
  & \shortstack{\textbf{Final}\\\textbf{gain}}
  & \shortstack{\textbf{Peak}\\\textbf{gain}}
  & \shortstack{\textbf{Draw-}\\\textbf{down}} \\
  \midrule

  \multicolumn{4}{@{}l}{\textbf{MiniMax-M2.7}} \\
  SkillOpt
  & $+5.0$
  & $+7.4$
  & $\mathbf{2.3}$ \\
  \method{} w/o val
  & $+1.2$
  & $+5.4$
  & $4.2$ \\
  \cellcolor{locushl!55}\method{}
  & \cellcolor{locushl!55}$\mathbf{+6.3}$
  & \cellcolor{locushl!55}$\mathbf{+9.0}$
  & \cellcolor{locushl!55}$2.7$ \\

  \addlinespace[1.5pt]

  \multicolumn{4}{@{}l}{\textbf{GPT-5.5}} \\
  SkillOpt
  & $+6.2$
  & $+12.8$
  & $6.6$ \\
  \method{} w/o val
  & $+8.2$
  & $+10.1$
  & $1.9$ \\
  \cellcolor{locushl!55}\method{}
  & \cellcolor{locushl!55}$\mathbf{+19.0}$
  & \cellcolor{locushl!55}$\mathbf{+19.4}$
  & \cellcolor{locushl!55}$\mathbf{0.4}$ \\

  \addlinespace[1.5pt]

  \multicolumn{4}{@{}l}{\textbf{Claude-4.6}} \\
  SkillOpt
  & $+10.2$
  & $+12.1$
  & $2.0$ \\
  \method{} w/o val
  & $-6.5$
  & $+5.6$
  & $12.1$ \\
  \cellcolor{locushl!55}\method{}
  & \cellcolor{locushl!55}$\mathbf{+19.6}$
  & \cellcolor{locushl!55}$\mathbf{+21.1}$
  & \cellcolor{locushl!55}$\mathbf{1.5}$ \\

  \midrule

  \multicolumn{4}{@{}l}{\textbf{Overall}} \\
  SkillOpt
  & $+7.1$
  & $+10.8$
  & $3.6$ \\
  \method{} w/o val
  & $+0.9$
  & $+7.0$
  & $6.1$ \\
  \cellcolor{locushl!55}\method{}
  & \cellcolor{locushl!55}$\mathbf{+14.9}$
  & \cellcolor{locushl!55}$\mathbf{+16.5}$
  & \cellcolor{locushl!55}$\mathbf{1.5}$ \\

  \bottomrule
  \end{tabular}

  \caption{
  Gain retention over evolution.
  Best results are bold.
  }
  \label{tab:stability}
  \vspace{-1.0\baselineskip}
\end{wraptable}
We compare \method{} against the recent strong skill-optimization  method SkillOpt~\citep{yang2026skillopt},
\method{} without validation, and the unevolved base model.
For a controlled comparison, SkillOpt and \method{}
both use $50$ validation tasks per repository
We evaluate three frontier backone LLMs spanning different model families and capabilty levels: the open-weight MiniMax-M2.7
\citep{minimax2026m27}, served locally with temperature $1.0$, top-$p$ $0.95$,
and top-$k$ $40$, and two proprietary models, GPT-5.5
\citep{openai2026gpt55} and Claude-4.6-sonnet
\citep{anthropic2026claude46}.

As the goal is to properly evaluate the long-horizon agent evolution, we use final-checkpoint avg@3 resolved rate on the held-out evaluation set as
the primary performance metric (Table~\ref{tab:main}). 
To measure retention, we additionally report peak gain and
peak-to-final drawdown over the evolution horizon
(Table~\ref{tab:stability}); complete per-repository peak results are in
Appendix~\ref{app:impl-evaluation}.
Applying the theorical bounds with the stated task-level variance bound 
$\sigma^2=0.025$, the $100$-task evaluation set yields one-sided radii of $4.8$
points for final gain and $6.5$ points for drawdown at $A=8$
(Appendix~\ref{app:holdout-choice}). Validated evolution takes $122.1$ hours per
run on average, compared with $50.6$ hours without the gate and $96.1$ hours for
SkillOpt (Appendix~\ref{app:efficiency}).

\subsection{Main results}
\label{sec:main-results}

Table~\ref{tab:main} shows that \method{} improves final performance across all
three backbones. Across the twelve model-repository runs, the average resolved
rate increases from $63.9\%$ to $78.9\%$, a gain of $14.9$ points, and every
\method{} run outperforms both its unevolved base model and its counterpart without validation.
The model-average gains are $6.3, 19.0, 19.6$ for MiniMax-M2.7, GPT-5.5, and Claude-4.6, respectively.
\method{} also outperforms SkillOpt in final model average for all three backbones,
although SkillOpt remains best on MiniMax-M2.7 \texttt{deepdiff} and
Claude-4.6 \texttt{pydicom}.
These results are consistent with our theoretical analysis and support validaton-gated updates as a practical mechanism for reliable long-horizon agent evolution.

\paragraph{Validation turns transient gains into retained gains.}
Table~\ref{tab:stability} separates peak improvement from the gain retained at
the end of evolution. Overall, SkillOpt reaches a peak gain of $10.8$ points
and retains $7.1$, with $3.6$ points of drawdown. \method{} w/o val reaches
$7.0$ points at peak but retains only $0.9$, with $6.1$ points of drawdown.
\method{} instead reaches $16.5$ points at peak and retains $14.9$, reducing
average drawdown to $1.5$ points.

The contrast is strongest for Claude-4.6: without validation, the average run
peaks at $+5.6$ points but ends at $-6.5$, whereas \method{} reaches $+21.1$ and
retains $+19.6$. Three of the four Claude-4.6 runs without validation and the
MiniMax-M2.7 \texttt{pydicom} run finish below baseline.

Figure~\ref{fig:validation-effect} shows the same retention effect over time.
On \texttt{pygments} with Claude-4.6
(Figure~\ref{fig:evolution-curve-main}), \method{} peaks at $97.0\%$ and ends at
$92.3\%$ after approximately $1{,}000$ stream tasks, whereas \method{} w/o val
peaks at $91.7\%$ and falls to $80.0\%$, an $11.7$-point drawdown. Across all
twelve runs (Figure~\ref{fig:drawdown-main}), validation reduces average
drawdown from $6.1$ to $1.5$ points.

\section{Ablation and Insights for Skill Optimization}
\label{sec:ablation}

We analyze which design choices drive skill optimization
(Sec.~\ref{sec:ablation-study}), how validation size affects retained gains
(Sec.~\ref{sec:validation-size}), and how validation shapes skill-bank growth
and utility (Sec.~\ref{sec:bank-growth}).

\subsection{Component ablations}
\label{sec:ablation-study}

\newcommand{\figAblationBody}[1]{%
\centering
\includegraphics[width=#1\linewidth]{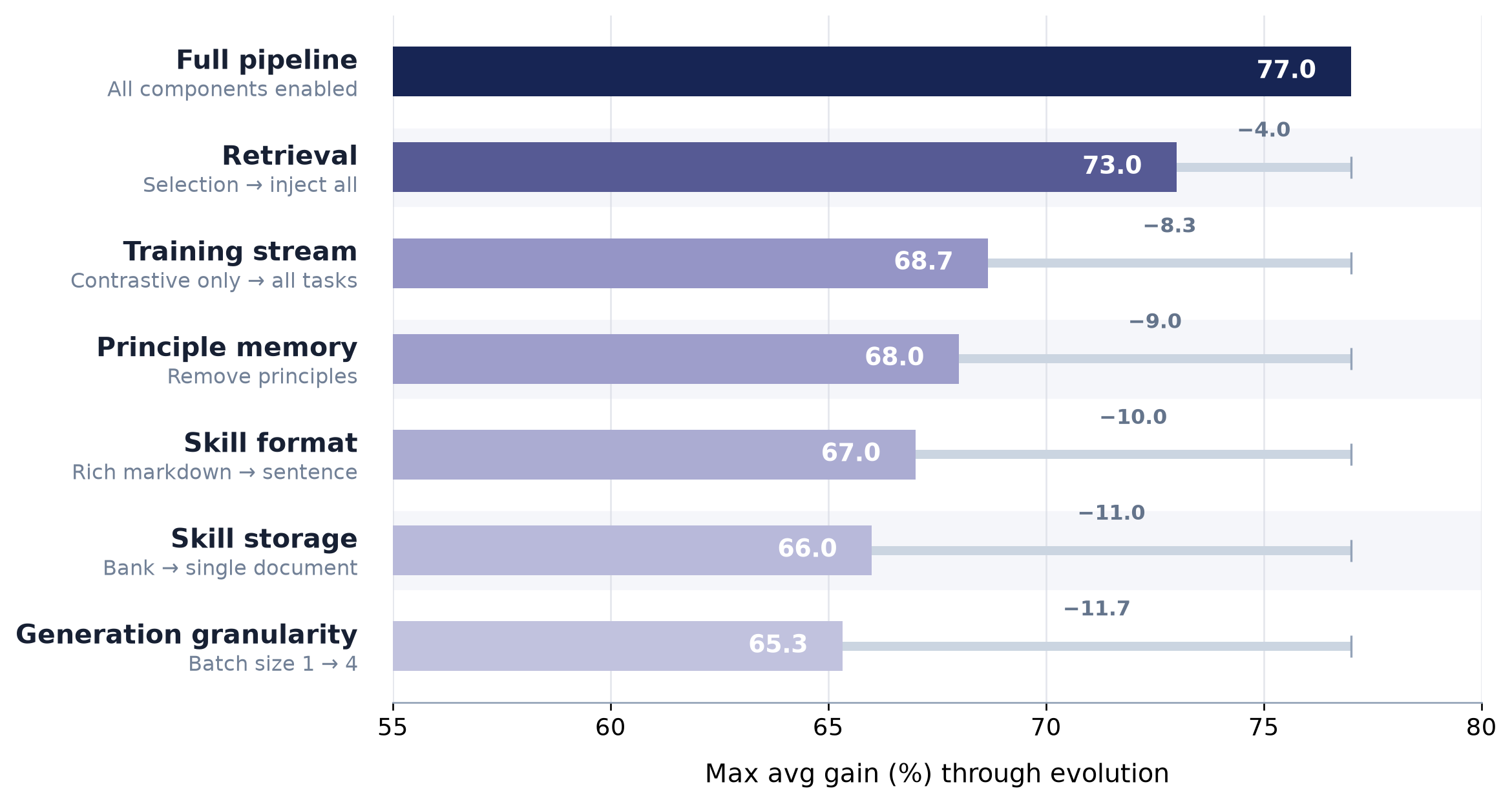}
\caption{
Ablation of component-level design choices on \texttt{pydicom} with GPT-5.5.
Appendix~\ref{app:ablation-design} maps prior self-evolving methods to the compared alternatives; the step-0 baseline is $38.2\%$.
}
\label{fig:ablation}}
\iclronly{\begin{figure}[ht]\figAblationBody{0.98}\end{figure}}
\arxivonly{\begin{figure}[H]\figAblationBody{0.75}\end{figure}}

Figure~\ref{fig:ablation} compares the full pipeline with six variants, each
changing one component while holding the others fixed. Both sides of every
comparison correspond to design choices used in existing self-evolving methods
(Table~\ref{tab:ablation-design} in Appendix~\ref{app:ablation-design} maps each
choice to prior work). The full pipeline outperforms every variant. Selective
retrieval contributes $4.0$ points, while restricting extraction to the
contrastive training stream contributes $8.3$ points. Removing principle memory
costs $9.0$ points, and replacing rich markdown with sentence-format skills
costs $10.0$ points. The largest drops come from replacing the skill bank with a
single document ($11.0$ points) and increasing generation batch size from $1$
to $4$ ($11.7$ points), indicating that skill organization and extraction
granularity have the largest measured effects.

\subsection{Effect of validation size}
\label{sec:validation-size}

\newcommand{\figValidationSize}[1]{%
\begin{wrapfigure}{r}{#1\linewidth}
  \centering
  \vspace{-1.5\baselineskip}
  \includegraphics[width=\linewidth]{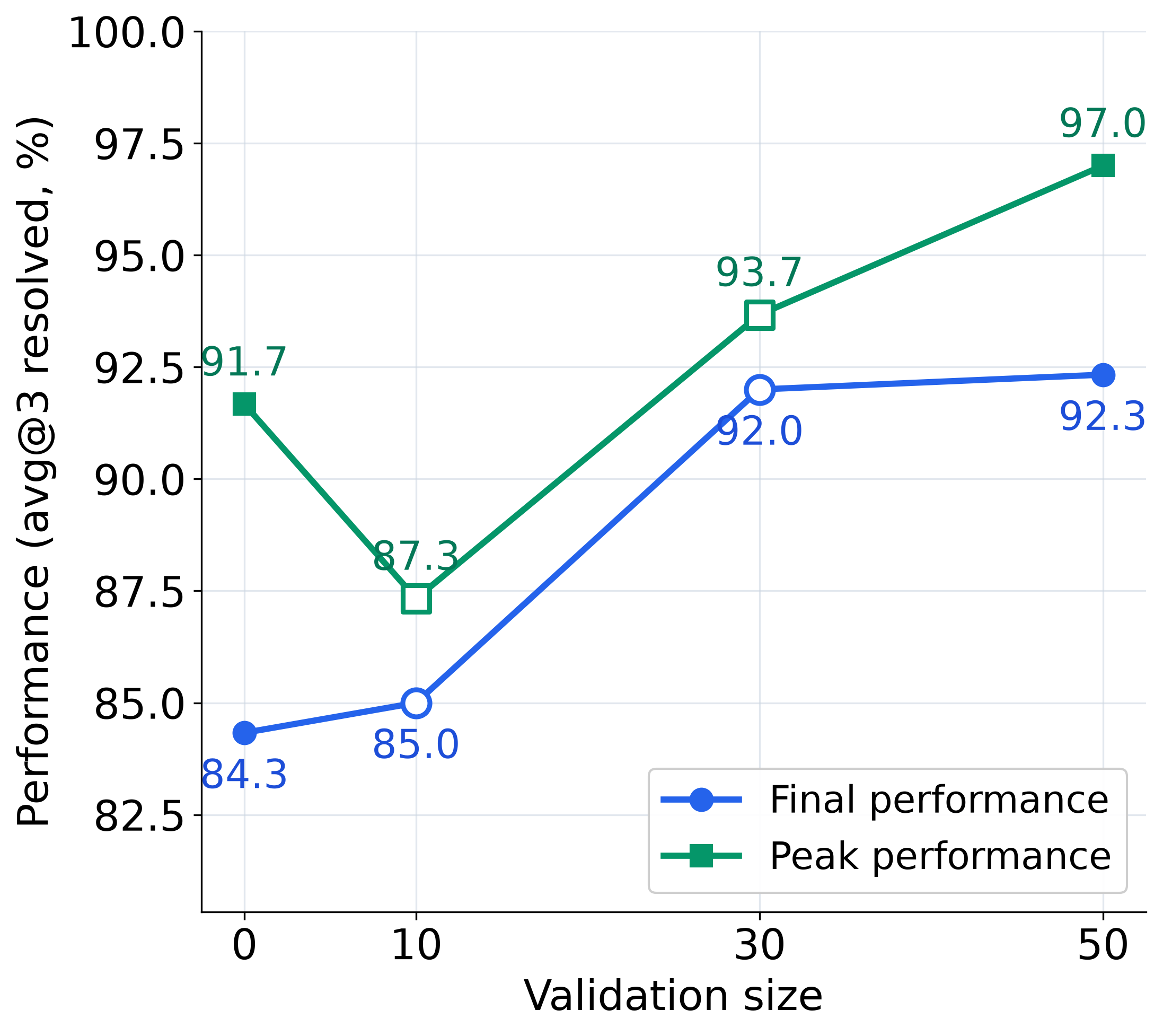}
  \caption{Final and peak avg@3 resolved rate versus validation sizes using 
  Claude-4.6 on repository \texttt{pygments}.}
  \label{fig:validation-size}
  \vspace{-1.0\baselineskip}
\end{wrapfigure}}
\newcommand{\figBankGrowth}[2]{
\begin{wrapfigure}{r}{#1\linewidth}
  \centering
  \vspace{-0.5\baselineskip}
  \includegraphics[width=\linewidth]{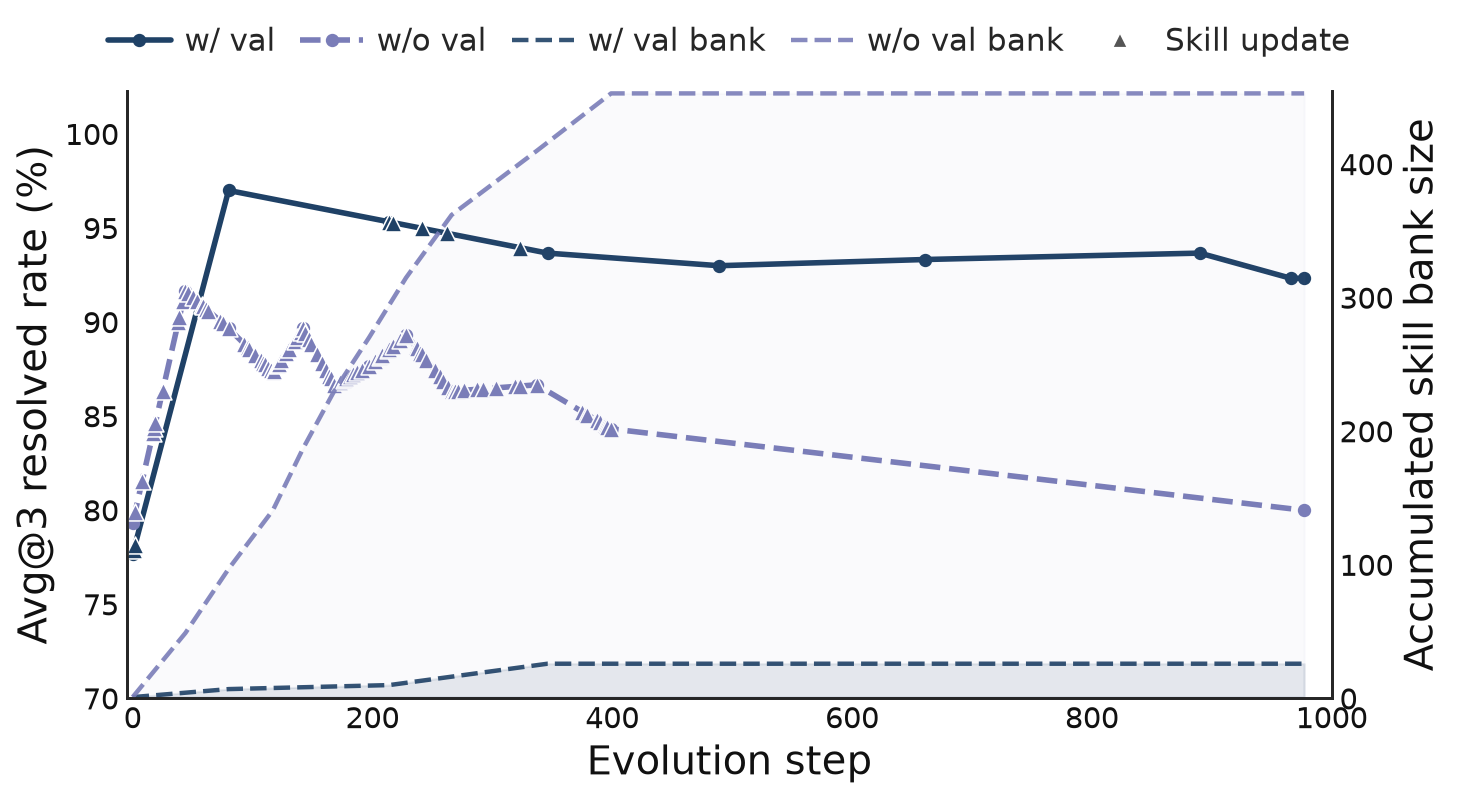}
  \caption{Accumulated bank size over evolution for
  Claude-4.6 on \texttt{pygments}. Solid curves show avg@3 resolved rate,
  dashed curves show bank size, and triangles mark bank updates.}
  \label{fig:bank-growth-example}
  \vspace{#2\baselineskip}
\end{wrapfigure}}

\iclronly{\figValidationSize{0.43}}
\arxivonly{\figValidationSize{0.40}}

Figure~\ref{fig:validation-size} sweeps the validation size over $0$, $10$,
$30$, and $50$ tasks for Claude-4.6 on \texttt{pygments}. Final resolved rate
rises from $80.0\%$ without validation to $85.0\%$, $92.0\%$, and $92.3\%$ at
sizes $10$, $30$, and $50$, with most of the improvement occurring between $10$
and $30$ tasks. Peak performance is not monotone: it first falls from $91.7\%$
to $87.3\%$, then rises to $93.7\%$ and $97.0\%$.

The final-performance trend is qualitatively consistent with
Sec.~\ref{sec:theory}: for fixed variance and log-budget, a larger panel shrinks
the deviation radius of Theorem~\ref{thm:reuse-main}, tightening the certified bound on 
per-update population loss.
The theorem does not imply monotone realized performance,
however, because changing the panel also changes the accepted updates and the resulting evolution path. 
We therefore interpret this sweep as evidence
that validation size affects retained gains, not as empirical verification of the
bound or evidence that any particular panel size is universally sufficient.

\subsection{Validated skill-bank dynamics and utility}
\label{sec:bank-growth}
\label{sec:case-study}
\begin{table}[t]
\centering
\footnotesize
\setlength{\tabcolsep}{3.0pt}
\renewcommand{\arraystretch}{1.05}

\begin{tabular}{@{}lccccc@{}}
\toprule
& \multicolumn{5}{c}{\textbf{Final bank size (skills / principles)}} \\
\cmidrule(lr){2-6}
\textbf{Setting}
& \textbf{\texttt{pygments}}
& \textbf{\texttt{sqlfluff}}
& \textbf{\texttt{deepdiff}}
& \textbf{\texttt{pydicom}}
& \textbf{Total} \\
\midrule

\multicolumn{6}{@{}l}{\textbf{MiniMax-M2.7}} \\

\method{} w/o val
& $244/231$
& $237/220$
& $238/239$
& $205/197$
& $924/887$ \\

\cellcolor{locushl!55}\method{}
& \cellcolor{locushl!55}$30/34$
& \cellcolor{locushl!55}$38/38$
& \cellcolor{locushl!55}$14/13$
& \cellcolor{locushl!55}$34/31$
& \cellcolor{locushl!55}$116/116$ \\

\addlinespace[3pt]

\multicolumn{6}{@{}l}{\textbf{GPT-5.5}} \\

\method{} w/o val
& $249/194$
& $234/160$
& $267/210$
& $174/127$
& $924/691$ \\

\cellcolor{locushl!55}\method{}
& \cellcolor{locushl!55}$3/3$
& \cellcolor{locushl!55}$3/3$
& \cellcolor{locushl!55}$3/2$
& \cellcolor{locushl!55}$38/28$
& \cellcolor{locushl!55}$47/36$ \\

\addlinespace[3pt]

\multicolumn{6}{@{}l}{\textbf{Claude-4.6}} \\

\method{} w/o val
& $249/204$
& $227/184$
& $54/33$
& $44/25$
& $574/446$ \\

\cellcolor{locushl!55}\method{}
& \cellcolor{locushl!55}$12/14$
& \cellcolor{locushl!55}$11/10$
& \cellcolor{locushl!55}$5/4$
& \cellcolor{locushl!55}$13/12$
& \cellcolor{locushl!55}$41/40$ \\

\midrule

\multicolumn{6}{@{}l}{\textbf{Repo total}} \\

\method{} w/o val
& $742/629$
& $698/564$
& $559/482$
& $423/349$
& $2{,}422/2{,}024$ \\

\cellcolor{locushl!55}\method{}
& \cellcolor{locushl!55}$45/51$
& \cellcolor{locushl!55}$52/51$
& \cellcolor{locushl!55}$22/19$
& \cellcolor{locushl!55}$85/71$
& \cellcolor{locushl!55}$204/192$ \\

\bottomrule
\end{tabular}

\caption{
Final bank size after repository-level evolution.
Entries report skills / principles.
}
\label{tab:bank-size}
\end{table}

Validation substantially changes the dynamics of bank growth. On
\texttt{pygments} with Claude-4.6, the validated bank grows through seven
accepted updates to $26$ items ($12$ skills and $14$ principles) by step $323$
and then remains fixed. In contrast, \method{} w/o val accumulates $453$ items
by approximately step $400$ yet finishes at a lower resolved rate
($80.0\%$ versus $92.3\%$; Figure~\ref{fig:bank-growth-example}).
This suggests
that long-horizon improvement depends not on continually accumulating more
memory, but on selectively retaining useful updates.

Across all main runs, \method{} retains $204$ skills and $192$ principles,
compared with $2{,}422$ skills and $2{,}024$ principles for
\method{} w/o val, an $11\times$ reduction in total bank size
(Table~\ref{tab:bank-size}).

\iclronly{\figBankGrowth{0.48}{-1.5}}%
\arxivonly{\figBankGrowth{0.52}{-0.5}}%

The retained items are also individually useful. We evaluate each of the $26$
items in the validated \texttt{pygments} bank alone on the same $100$-task
holdout with three rollouts per task. Every item improves resolved rate over the
no-memory baseline, with gains ranging from $0.33$ to $10.33$ points
(Table~\ref{tab:artifact-utility} in Appendix~\ref{app:cases}). We group the
items into \emph{Environment}, \emph{Repair Strategy}, and \emph{Code
Semantics}, reflecting whether they concern execution, the repair process, or
program behavior.

The learned items span multiple levels of abstraction
(Figure~\ref{fig:artifact-examples}). Skills encode executable procedures,
ranging from a Pygments-specific yield-tuple invariant to a general workflow
for multi-edit regressions, while principles capture transferable diagnostic
lessons, such as restoring intended statement order instead of masking the
error with an arbitrary default.

\begin{figure}[t]
\centering
\setlength{\fboxsep}{0pt}
\setlength{\fboxrule}{0.5pt}
\newcommand{\artifactExamplesBox}{%
\fcolorbox{black!30}{white}{%
\begin{minipage}{0.95\linewidth}
{\setlength{\fboxsep}{6pt}%
\colorbox{black!7}{\parbox{\dimexpr\linewidth-12pt\relax}{%
\strut\small\textbf{Representative learned items}}}}\par
\vspace{8pt}

\hspace*{10pt}\begin{minipage}{\dimexpr\linewidth-20pt\relax}
\small

\hyperref[app:item-audit-yield]{%
\nolinkurl{audit_yield_tuple_index_group_consistency.md}}\par
{\footnotesize\color{black!55}
Code Semantics $\cdot$ Skill $\cdot$ evolved steps 323 $\cdot$ gain $+2.00$ pp.}
\par\smallskip
Summarized content: Check every lexer yield tuple to ensure that its token
position and value come from the same regular-expression capture group.\par

\vspace{5pt}
{\color{black!15}\hrule height 0.4pt}
\vspace{5pt}

\hyperref[app:item-enumerate-swaps]{%
\nolinkurl{enumerate_all_swapped_values_in_buggy_commit.md}}\par
{\footnotesize\color{black!55}
Repair Strategy $\cdot$ Skill $\cdot$ evolved steps 262 $\cdot$ gain $+10.33$ pp.}
\par\smallskip
Summarized content: Inspect the complete bug-introducing commit, classify every
mutation, and fix all checklist items rather than only the first visible swap.
\par

\vspace{5pt}
{\color{black!15}\hrule height 0.4pt}
\vspace{5pt}

\hyperref[app:item-use-before-assignment]{%
\nolinkurl{variable_use_before_assignment_signals_reordered_code.md}}\par
{\footnotesize\color{black!55}
Code Semantics $\cdot$ Principle $\cdot$ evolved steps 323 $\cdot$ gain $+3.33$ pp.}
\par\smallskip
Summarized content: When a previously working function uses a variable before
its later assignment, restore the intended statement order instead of adding
an arbitrary default.

\end{minipage}
\vspace{9pt}
\end{minipage}}}
\iclronly{\artifactExamplesBox}%
\providecommand{\artifactscale}{0.90}
\arxivonly{\scalebox{\artifactscale}{\artifactExamplesBox}}%

\caption{
Representative learned skills and principles.
Linked Appendix entries contain the full stored contents.
}
\label{fig:artifact-examples}
\end{figure}
\section{Conclusion}

Long-term self-evolution asks how an agent with a frozen LLM core can
continually improve at test time, and its central obstacle is unstable updates
that cause drawdown and eventual collapse over long horizons. We distilled the
common design of existing in-context evolution methods into an
Extract--Validate--Retrieve pipeline and formulated it as text-space
optimization of a retrieved skill bank. On this formulation, positive-margin
acceptance converges after finitely many bank changes, a disjoint evaluation set
certifies final gain and long-horizon drawdown, and a uniform pathwise bound
controls the population loss of updates chosen adaptively on a reused
validation panel; inverting the same deviation radius gives sizing rules for
both holdouts. Empirically, \method sustains evolution over streams of
approximately $1{,}000$ software engineering tasks per repository, achieving
average final and peak gains of $14.9$ and $16.5$ points across three backbones
and four repositories, compared with $0.9$ and $7.0$ points without a gate,
while cutting average drawdown by $4.0\times$ with an $11\times$ more compact
skill bank. As agents move toward lifelong deployment, this work offers
practical guidance on validation gating and a step toward stable, trustworthy
continual self-evolution.
\markmainend

\bibliographystyle{plainnat}
\bibliography{main}

\newpage
\appendix

\section{Proofs and Certification Details}
\label{app:convergence}

This appendix proves the results of Sec.~\ref{sec:theory}. Appendix~\ref{app:radius}
establishes the deviation radius and the range constants used throughout;
Appendix~\ref{app:conv} proves convergence and Appendix~\ref{app:preflight-proof}
the localized refinement; Appendices~\ref{app:cert-proof} and
\ref{app:separation-proof} prove the held-out certificates;
Appendix~\ref{app:reuse-proof} proves the reused-panel bound; and
Appendix~\ref{app:sizing-proof} inverts the radius.

\subsection{Online dynamics of the context policy}
\label{app:online}

The solve probability of a context policy $\mu$ marginalizes the context:
\begin{equation}
  \label{eq:marginal}
  \mathbb{P}_{\mu}(\tau\in\mathcal{T}_x^+\mid x)
  =\sum_c\underbrace{q(x,c)}_{\text{quality}}
  \underbrace{\mu(c\mid x,\mathcal{E})}_{\text{production probability}}
  =\mathbb{E}_{c\sim\mu(\cdot\mid x,\mathcal{E})}[q(x,c)],
\end{equation}
where $\mathcal{T}_x^+:=\{\tau:s(x,\tau)=1\}$. The two factors expose the central
design questions behind the population objective $F_{\mathcal{D}}$ of Sec.~\ref{sec:formulation}: which contexts have high quality, and
how should the agent produce them?

Experience is endogenous: the policy shapes the rollouts that subsequently
change the policy. The agent acts causally. At step $t$, it observes only the
strictly past experience $\mathcal{E}_{<t}$, draws
$c_t\sim\mu(\cdot\mid x_t,\mathcal{E}_{<t})$, produces
$\tau_t\sim\pi(\cdot\mid x_t,c_t)$, and observes $s_t:=s(x_t,\tau_t)$. It then
appends the complete interaction record,
\begin{equation}
  \label{eq:experience-update}
  \mathcal{E}_{<t+1}=\mathcal{E}_{<t}\mathbin{\|}(x_t,c_t,\tau_t,s_t),
\end{equation}
where $\mathbin{\|}$ denotes sequence concatenation. Thus, although
$F_{\mathcal{D}}$ (Sec.~\ref{sec:formulation}) is written as a population objective, its context policy and
experience distribution are coupled through the online rollout process. Over an
evolution horizon of length $T$, the corresponding cumulative objective is
\begin{equation}
  \label{eq:online-objective}
  \mathcal{J}_T(\mu):=\mathbb{E}\!\left[\sum_{t=1}^{T}s_t\right]
  =\mathbb{E}\!\left[\sum_{t=1}^{T}q(x_t,c_t)\right],
\end{equation}
where the expectation includes task sampling, context selection, and trajectory
generation. A successful evolution rule must therefore improve future contexts
without using information from future interactions.

\subsection{The deviation radius}
\label{app:radius}

\begin{lemma}[One-sided Bennett radius]
\label{lem:bennett}
Let $Y_1,\ldots,Y_N$ be i.i.d.\ with mean $\nu$, variance at most $s^2$, and
$Y_i-\nu\le M$ almost surely. Then
\begin{equation}
  \label{eq:lem-bennett}
  \mathbb{P}\left(\frac1N\sum_{i=1}^N Y_i-\nu>b_M(N,L,s^2)\right)\;\le\;e^{-L},
\end{equation}
where $b_M$ is the one-sided Bennett radius of Sec.~\ref{sec:formulation},
\begin{equation}
  \label{eq:radius}
  b_M(N,L,s^2)
  :=\frac{s^2}{M}\,
  \mathrm{h}^{-1}\!\Big(\frac{M^2L}{Ns^2}\Big)
  \le\sqrt{\frac{2s^2L}{N}}+\frac{2ML}{3N},
  \qquad
  \mathrm{h}(v):=(1+v)\ln(1+v)-v .
\end{equation}
\end{lemma}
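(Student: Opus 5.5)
The plan is the standard Cram\'er--Chernoff argument with Bennett's moment generating function (MGF) bound. Throughout, the statement's \(s^2\) is the variance bound, and I write \(\lambda\) for the exponential-tilt parameter.

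\textbf{Step 1 (centering and MGF bound).} Let \(Z_i:=Y_i-\nu\), so \(\mathbb{E}Z_i=0\), \(\mathbb{E}Z_i^2\le s^2\), and \(Z_i\le M\). The function \(\phi(u):=(e^{u}-1-u)/u^2\) is nondecreasing in \(u\), so for \(\lambda>0\),
\[
e^{\lambda Z_i}\le 1+\lambda Z_i+\lambda^2Z_i^2\,\frac{e^{\lambda M}-1-\lambda M}{\lambda^2M^2}.
\]
Taking expectations and using \(1+y\le e^y\) gives
\[
\mathbb{E}e^{\lambda Z_i}\le \exp\!\Big(\frac{s^2}{M^2}\,\big(e^{\lambda M}-1-\lambda M\big)\Big).
\]

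\textbf{Step 2 (Chernoff bound and optimization).} By independence and Markov's inequality, for any \(\epsilon>0\),
\[
\mathbb{P}\Big(\tfrac1N\textstyle\sum_i Z_i>\epsilon\Big)\le\exp\!\Big(-\lambda N\epsilon+\frac{Ns^2}{M^2}\big(e^{\lambda M}-1-\lambda M\big)\Big).
\]
Choose \(\lambda=\frac1M\ln(1+M\epsilon/s^2)\). This yields the Bennett tail
\[
\exp\!\big(-(Ns^2/M^2)\,\mathrm{h}(M\epsilon/s^2)\big).
\]

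\textbf{Step 3 (inverting to get the radius).} The function \(\mathrm{h}\) is continuous and strictly increasing on \([0,\infty)\), with \(\mathrm{h}(0)=0\) and \(\mathrm{h}(v)\to\infty\). Hence \(\mathrm{h}^{-1}\) is well defined. Setting the exponent equal to \(L\) gives
\[
\epsilon=\frac{s^2}{M}\,\mathrm{h}^{-1}\!\Big(\frac{M^2L}{Ns^2}\Big)=b_M(N,L,s^2),
\]
which proves \eqref{eq:lem-bennett}. The degenerate case \(s^2=0\) gives \(Z_i=0\) almost surely, so the probability is \(0\) and the claim is trivial; I would mention it in one line.

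\textbf{Step 4 (the explicit upper bound).} This is the only step needing a small calculation. I would use the standard inequality
\[
\mathrm{h}(v)\ge \frac{v^2}{2(1+v/3)},
\]
which follows by checking that the difference vanishes at \(0\) together with its first derivative, and has nonnegative second derivative (or by comparing series). Solving \(w^2/(2(1+w/3))=y\) for \(w\) gives
\[
w=\frac{y}{3}+\sqrt{\frac{y^2}{9}+2y}\le\frac{2y}{3}+\sqrt{2y}.
\]
Monotonicity then gives \(\mathrm{h}^{-1}(y)\le \sqrt{2y}+2y/3\). Substituting \(y=M^2L/(Ns^2)\) and multiplying by \(s^2/M\) yields
\[
b_M\le \sqrt{\frac{2s^2L}{N}}+\frac{2ML}{3N},
\]
as claimed.

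The main care point is Step 1, namely the monotonicity of \(\phi\), which is what lets the one-sided bound \(Z_i\le M\) suffice without a lower bound. The rest is routine.
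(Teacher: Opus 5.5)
Your proposal is correct and follows essentially the same route as the paper: Bennett's tail bound, inversion of the strictly increasing $\mathrm{h}$, and then the relaxation $\mathrm{h}(v)\ge v^2/(2+\tfrac23 v)$ followed by solving a quadratic. There are two differences: you derive Bennett's inequality via the Cram\'er--Chernoff argument, whereas the paper simply cites it; and you bound $\mathrm{h}^{-1}(y)$ directly and conclude by monotonicity of $\mathrm{h}$, which makes explicit the comparison between the Bennett radius and the Bernstein-type radius that the paper's ``set the relaxed exponent to $e^{-L}$'' step leaves implicit.
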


\begin{proof}
Bennett's inequality for independent summands with the stated variance and
one-sided range gives, for every $b>0$,
\begin{equation}
  \label{eq:bennett-raw}
  \mathbb{P}\left(\frac1N\sum_{i=1}^N Y_i-\nu>b\right)
  \;\le\;\exp\left(-\frac{Ns^2}{M^2}\,\mathrm{h}\!\left(\frac{Mb}{s^2}\right)\right),
  \qquad \mathrm{h}(v)=(1+v)\ln(1+v)-v .
\end{equation}
The function $\mathrm{h}$ is strictly increasing on $[0,\infty)$ with
$\mathrm{h}(0)=0$, so it is invertible there. Setting the exponent in
\eqref{eq:bennett-raw} equal to $-L$ and solving,
\begin{equation}
  \frac{Ns^2}{M^2}\,\mathrm{h}\!\left(\frac{Mb}{s^2}\right)=L
  \quad\Longleftrightarrow\quad
  \frac{Mb}{s^2}=\mathrm{h}^{-1}\!\left(\frac{M^2L}{Ns^2}\right)
  \quad\Longleftrightarrow\quad
  b=\frac{s^2}{M}\,\mathrm{h}^{-1}\!\left(\frac{M^2L}{Ns^2}\right),
\end{equation}
which is the first expression in \eqref{eq:radius}.

For the displayed relaxation, use $\mathrm{h}(v)\ge v^2/(2+\tfrac23v)$, valid for
$v\ge0$. Substituting $v=Mb/s^2$ turns \eqref{eq:bennett-raw} into
\begin{equation}
  \exp\left(-\frac{Ns^2}{M^2}\cdot\frac{M^2b^2/s^4}{2+\tfrac{2Mb}{3s^2}}\right)
  =\exp\left(-\frac{Nb^2}{2s^2+\tfrac{2M}{3}b}\right).
\end{equation}
Setting this to $e^{-L}$ gives $Nb^2=L\big(2s^2+\tfrac{2M}{3}b\big)$, that is
$b^2\le a'+c'b$ with $a'=2s^2L/N$ and $c'=2ML/(3N)$. Since $b^2\le a'+c'b$
implies $b\le\sqrt{a'}+c'$, we obtain
$b\le\sqrt{2s^2L/N}+2ML/(3N)$.
\end{proof}

Every summand below lies in $[-1,1]$, for which $M=2$ is always admissible. The
following remark identifies the smaller admissible range in each of our
statements.

\begin{remark}[Effective range]
\label{rem:range}
Each certificate has the form: establish $\nu\ge c$ for a population contrast
$\nu$, by showing that the empirical mean does not exceed $\nu$ by more than $b$,
with $c$ equal to the empirical value minus $b$. On the failure event $\nu<c$,
the deviation required is
\begin{equation}
  \label{eq:peel}
  \frac1N\sum_i Y_i-\nu\;\ge\;b+(c-\nu),
\end{equation}
which grows as $\nu$ decreases, while the admissible range $M=1-\nu$ also grows.
Substituting both into the exponent of \eqref{eq:bennett-raw} gives
\begin{equation}
  \frac{Ns^2}{(1-\nu)^2}\,\mathrm{h}\!\left(\frac{(1-\nu)(b+c-\nu)}{s^2}\right).
\end{equation}
For completeness, this exponent is nondecreasing in $-\nu$ whenever, for every
$\nu\in[-1,c]$,
\begin{equation}
  \frac{(1-\nu)(b+c+1-2\nu)}{s^2}\ln(1+v_\nu)
  \;\ge\;2\,\mathrm{h}(v_\nu),
  \qquad
  v_\nu:=\frac{(1-\nu)(b+c-\nu)}{s^2}.
  \label{eq:effective-range-condition}
\end{equation}
This follows by differentiating the displayed exponent with respect to
$1-\nu$ and is directly checkable for any reported $(N,s^2,b,c)$. We verified
\eqref{eq:effective-range-condition} over the full interval for the numerical
configurations used in this paper. Under this condition the binding case is
$\nu\to c$ and the effective range is $M=1-c$. Consequently: for the drawdown
\eqref{eq:ddcert}, only checkpoints with a positive drop can attain the maximum,
so $c=0$ and $M=1$; for the gain \eqref{eq:gaincert} and the separation
\eqref{eq:sepcert}, $c\ge0$ whenever the certified floor is nonnegative, so
$M=1$; and for the reused panel \eqref{eq:reusecert}, $c=-\gamma$ and
$M=1+\gamma$. If \eqref{eq:effective-range-condition} is not verified, the
same statements remain valid with the conservative range $M=2$, because every
summand lies in $[-1,1]$.
\end{remark}

\subsection{Proof of fixed-panel convergence}
\label{app:conv}

\begin{proof}[Proof of Theorem~\ref{thm:conv-main}]
A rejected move leaves both the bank and its recorded panel score unchanged, so
$\widehat F_{t+1}=\widehat F_t$. An accepted move at step $t$ replaces the current
bank by $m_t(\mathcal{B}_t)$ and, because the accepted candidate's score is
carried forward, replaces $\widehat F_t$ by the candidate's score. Hence
\begin{equation}
  \label{eq:increment}
  \widehat F_{t+1}-\widehat F_t=\widehat g_V(m_t\mid\mathcal{B}_t)\;\ge\;\rho,
\end{equation}
the inequality being the acceptance rule \eqref{eq:gate}. In both cases
$\widehat F_{t+1}\ge\widehat F_t$, so the sequence is non-decreasing and
$\widehat F_T=\max_t\widehat F_t$.

Let $t_1<\cdots<t_K$ be the accepted rounds. Summing \eqref{eq:increment} over
them and telescoping,
\begin{equation}
  \widehat F_{t_K+1}-\widehat F_0
  =\sum_{j=1}^{K}\big(\widehat F_{t_j+1}-\widehat F_{t_j}\big)
  =\sum_{j=1}^{K}\widehat g_V(m_{t_j}\mid\mathcal{B}_{t_j})
  \;\ge\;K\rho .
\end{equation}
Since $\widehat F_t$ is an average of quantities in $[0,1]$ it satisfies
$\widehat F_t\in[0,1]$, whence
\begin{equation}
  K\rho\;\le\;\widehat F_{t_K+1}-\widehat F_0\;\le\;1-\widehat F_0\;\le\;1 ,
\end{equation}
and dividing by $\rho$ and taking integer parts gives \eqref{eq:count}. The bank
changes only at accepted rounds, so the number of distinct banks visited is at
most $K+1$.

The argument is deterministic. It uses no property of $V$ beyond $F_V$ being
bounded, and in particular permits arbitrary reuse of $V$ and history-dependent
proposals.
\end{proof}

\begin{remark}
If the current bank were re-scored with fresh rollouts at every comparison
instead of carried forward, \eqref{eq:increment} would hold only up to the
re-estimation drift. On the event that every such drift is bounded by
$\eta<\rho/2$, the same telescoping gives
$K\le\lfloor1/(\rho-2\eta)\rfloor$.
\end{remark}

\subsection{Baseline-concentrated headroom}
\label{app:preflight-proof}

Let $p(x)$ denote the baseline solve probability, let
$U:=\{x:p(x)\in(0,1)\}$ be the baseline-unstable set, and let
$h:=\mathbb{P}_{x\sim\mathcal{D}}(x\in U)$. With only $r$ baseline rollouts,
define the observed unstable panel fraction
$\widehat h:=n^{-1}\sum_{i=1}^n\mathbf{1}\{0<\widehat p_i<1\}$. This is generally
downward biased for $h$: tasks with small but nonzero $p(x)$ can produce only
failures in all $r$ rollouts.

\begin{assumption}[Baseline-concentrated gain]
\label{ass:local}
There are residual allowances $\varepsilon_{\mathrm{loc}},
\varepsilon_{\mathrm{var}}\ge0$ such that every reachable bank satisfies
\begin{equation}
  F_V(\mathcal{B})-F_V(\mathcal{B}_0)
  \le \widehat h+\varepsilon_{\mathrm{loc}},
\end{equation}
and every reachable move satisfies
\begin{align}
  \mathbb{E}_x[u_{m,\mathcal{B}}(x)^2]
  &\le \mathbb{E}_x[p(x)(1-p(x))]+\varepsilon_{\mathrm{var}},\\
  \mathbb{E}_x[\operatorname{Var}(\widehat u\mid x)]
  &\le \frac2r\mathbb{E}_x[p(x)(1-p(x))]+\varepsilon_{\mathrm{var}}.
\end{align}
\end{assumption}

Under Assumption~\ref{ass:local}, the number of accepted moves satisfies the
localized bound
\begin{equation}
  \label{eq:Kbound}
  K\rho\;\le\;\widehat h+\varepsilon_{\mathrm{loc}},
  \qquad\text{i.e.,}\qquad
  K\;\le\;\left\lfloor\frac{\widehat h+\varepsilon_{\mathrm{loc}}}{\rho}\right\rfloor .
\end{equation}

\begin{proof}[Proof of \eqref{eq:Kbound}]
Assumption~\ref{ass:local} directly bounds the realizable panel headroom. Using
it in the telescoping step of Appendix~\ref{app:conv} gives
$K\rho\le\widehat h+\varepsilon_{\mathrm{loc}}$, which is
\eqref{eq:Kbound}. The idealized zero-residual result is recovered at
$\varepsilon_{\mathrm{loc}}=0$, but we do not impose that value on the
experiments.
\end{proof}

The variance part of the assumption and the law of total variance give
\begin{equation}
  \operatorname{Var}\big[\widehat u\big]
  =\operatorname{Var}_x\big[u\big]+\mathbb{E}_x\big[\operatorname{Var}(\widehat u\mid x)\big]
  \;\le\;\Big(1+\frac2r\Big)\mathbb{E}_x\big[p(x)(1-p(x))\big]
  +2\varepsilon_{\mathrm{var}}.
\end{equation}
For $r$ independent Bernoulli baseline rollouts on task $i$, the plug-in
estimator satisfies
\begin{equation}
  \mathbb{E}\big[\widehat p_i(1-\widehat p_i)\big]=\frac{r-1}{r}\,p_i(1-p_i),
\end{equation}
so multiplying by $r/(r-1)$ removes the finite-$r$ downward bias, and averaging
over tasks gives \eqref{eq:vmass} as an unbiased estimate of
$\mathbb{E}_x[p(1-p)]$. When $r=3$ each empirically unstable task has
$\widehat p\in\{1/3,2/3\}$ and contributes $1/3$, so empirically
$\mathrm{vmass}=\widehat h/3$. This identity does not make $\widehat h$ an
unbiased estimate of the population mass $h$.

\subsection{Proof of the certified gain and drawdown}
\label{app:cert-proof}

\begin{proof}[Proof of Theorem~\ref{thm:cert-main}]
We prove the two statements separately.

\emph{Final gain.} The reporting rule fixes the pair
$(\mathcal{B}^{(0)},\mathcal{B}^{(J)})$ before $\mathcal{H}$ is drawn: both banks
are measurable functions of the adaptation stream, the validation panel and the
remaining randomness of the run, none of which involve $\mathcal{H}$ under
Assumption~\ref{ass:indep}. Conditioning on all of that randomness, the per-task
summands
\begin{equation}
  Y_i:=\widehat u_{\mathcal{B}^{(J)},\mathcal{B}^{(0)}}(x'_i),
  \qquad x'_i\in\mathcal{H},
\end{equation}
are i.i.d.\ in $[-1,1]$ with
\begin{equation}
  \mathbb{E}[Y_i]=F_{\mathcal{D}}(\mathcal{B}^{(J)})-F_{\mathcal{D}}(\mathcal{B}^{(0)})=:\nu,
  \qquad \operatorname{Var}[Y_i]\le\sigma_{\mathcal{H}}^2 ,
\end{equation}
and their average is $\widehat\Delta$. Applying Lemma~\ref{lem:bennett} with
$L=\ln(1/\delta)$ and, by Remark~\ref{rem:range}, $M=1$ whenever the certified
floor is nonnegative,
\begin{equation}
  \mathbb{P}\Big(\widehat\Delta-\nu>b_1\big(n_{\mathcal{H}},\ln\tfrac1\delta,\sigma_{\mathcal{H}}^2\big)\Big)\le\delta ,
\end{equation}
and on the complementary event $\nu\ge\widehat\Delta-b_1$, which is
\eqref{eq:gaincert}. No union bound arises because the comparison is fixed before
$\mathcal{H}$ is seen.

\emph{Drawdown.} Let
$D_j:=F_{\mathcal{D}}(\mathcal{B}^{(j)})-F_{\mathcal{D}}(\mathcal{B}^{(J)})$ with
paired estimate $\widehat D_j$ on $\mathcal{H}$. Checkpoints that hold the same
bank give identical $D_j$ and identical $\widehat D_j$, so it suffices to index
the $A$ distinct banks. Each pair $(\mathcal{B}^{(j)},\mathcal{B}^{(J)})$ is
again fixed before $\mathcal{H}$ is drawn, because the checkpoint schedule is
chosen in advance and the banks are functions of the stream and the panel.
Applying Lemma~\ref{lem:bennett} to each with $L=\ln(A/\delta)$ and $M=1$ by
Remark~\ref{rem:range}, and taking a union bound over the $A$ indices,
\begin{equation}
  \mathbb{P}\Big(\exists j:\;D_j>\widehat D_j+b_1\big(n_{\mathcal{H}},\ln\tfrac{A}{\delta},\sigma_{\mathcal{H}}^2\big)\Big)
  \;\le\;A\cdot\frac{\delta}{A}=\delta .
\end{equation}
On the complementary event, taking the maximum over $j$ on both sides,
\begin{equation}
  \mathrm{DD}=\max_j D_j\;\le\;\max_j\widehat D_j+b_1
  =\widehat{\mathrm{DD}}+b_1 ,
\end{equation}
which is \eqref{eq:ddcert}. That $M=1$ is admissible follows from
Remark~\ref{rem:range} with $c=0$: since $D_J=0$, both $\mathrm{DD}\ge0$ and
$\widehat{\mathrm{DD}}\ge0$, so indices with $D_j\le0$ satisfy the conclusion
trivially and only $D_j>0$ needs covering. Finally, $A\le K+1$ by
Theorem~\ref{thm:conv-main}.
\end{proof}

\subsection{Certified separation between gated and ungated evolution}
\label{app:separation-proof}

Write $\mathcal{B}^{\mathrm{val}}$ and $\mathcal{B}^{\mathrm{noval}}$ for the
final banks produced on a given repository with and without the gate.

\begin{theorem}[Certified separation of retained performance]
\label{thm:sep-main}
Fix a repository with task distribution $\mathcal{D}$, evaluate both runs on the
same held-out set $\mathcal{H}$, and let $\sigma_W^2$ bound the variance of the
per-task cross-condition contrast
\begin{equation}
  \label{eq:W}
  W(x):=q\big(x,R(x,\mathcal{B}^{\mathrm{val}})\big)
       -q\big(x,R(x,\mathcal{B}^{\mathrm{noval}})\big).
\end{equation}
Then with probability at least $1-\delta$,
\begin{equation}
  \label{eq:sepcert}
  F_{\mathcal{D}}(\mathcal{B}^{\mathrm{val}})
  -F_{\mathcal{D}}(\mathcal{B}^{\mathrm{noval}})
  \;\ge\;\widehat W-b_1\big(n_{\mathcal{H}},\ln\tfrac1\delta,\sigma_W^2\big).
\end{equation}
\end{theorem}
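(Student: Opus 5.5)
The plan follows the final-gain part of Theorem~\ref{thm:cert-main} almost verbatim, with the pair of banks $(\mathcal{B}^{\mathrm{val}},\mathcal{B}^{\mathrm{noval}})$ in place of $(\mathcal{B}^{(J)},\mathcal{B}^{(0)})$.

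\textbf{Step 1: both banks are fixed before $\mathcal{H}$ is drawn.} Each final bank is a measurable function of its own run's adaptation stream, its validation panel (absent for the ungated run) and its internal rollout randomness. Under Assumption~\ref{ass:indep}, none of this involves $\mathcal{H}$. Conditioning on all randomness from both runs therefore makes $W(\cdot)$ in \eqref{eq:W} a fixed function, and the comparison is chosen in advance.

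\textbf{Step 2: i.i.d.\ paired summands.} For $x'_i\in\mathcal{H}$, set
$Y_i:=\widehat u_{\mathcal{B}^{\mathrm{val}},\mathcal{B}^{\mathrm{noval}}}(x'_i)\in[-1,1]$, the $r$-rollout paired difference. Conditionally on Step 1, these summands are i.i.d.\ because the tasks are i.i.d.\ and the rollouts are fresh. Their mean is
$\mathbb{E}_x[W(x)]=F_{\mathcal{D}}(\mathcal{B}^{\mathrm{val}})-F_{\mathcal{D}}(\mathcal{B}^{\mathrm{noval}})=:\nu$, and $\widehat W$ denotes their average.

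\textbf{Step 3: variance.} I read $\sigma_W^2$ as bounding the variance of the summand actually used. If it only bounds $\operatorname{Var}_x[W(x)]$, I would add the finite-rollout term via the law of total variance, as in Appendix~\ref{app:preflight-proof}, i.e.\ absorb rollout noise into $\sigma_W^2$ as the paper does for $\sigma^2$ and $\sigma_{\mathcal{H}}^2$. Pinning down this reading is the one place needing care.

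\textbf{Step 4: concentration.} Apply Lemma~\ref{lem:bennett} with $N=n_{\mathcal{H}}$ and $L=\ln(1/\delta)$. This gives $\mathbb{P}(\widehat W-\nu>b_M)\le\delta$, and on the complement we get \eqref{eq:sepcert}. No union bound is needed because only one contrast is tested.

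\textbf{Step 5: the range constant.} The constant $M=1$ requires Remark~\ref{rem:range}. The certificate has $c=\widehat W-b$, and it is meaningful, with $c\ge0$, only when the floor is nonnegative. On the failure event $\nu<c$, the binding case $\nu\to c$ gives range $1-c\le1$, under the verified condition \eqref{eq:effective-range-condition}. Otherwise one falls back to $M=2$. As in \eqref{eq:gaincert}, the statement implicitly needs a nonnegative right-hand side for $M=1$, and I would say so explicitly. Since $\mathcal{B}^{\mathrm{noval}}$ may be far worse, $\nu$ can be negative; that is exactly the regime handled by the peeling argument of the remark, so no new difficulty arises.
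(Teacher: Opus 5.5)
Your proposal is correct and is essentially the paper's proof. You condition on all randomness other than $\mathcal{H}$ so both final banks are fixed, treat the per-task contrasts on $\mathcal{H}$ as i.i.d.\ in $[-1,1]$ with mean $F_{\mathcal{D}}(\mathcal{B}^{\mathrm{val}})-F_{\mathcal{D}}(\mathcal{B}^{\mathrm{noval}})$, and apply Lemma~\ref{lem:bennett} once with $L=\ln(1/\delta)$ and $M=1$ via Remark~\ref{rem:range}; your nonnegative-floor proviso is the one that remark requires. The paper takes the summands to be $W(x'_i)$ itself, so $\sigma_W^2$ applies directly without your Step~3 reinterpretation. Also, Step~5 has a sign slip: $\nu<0$ occurs when the ungated bank is \emph{better}, not worse.
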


\begin{corollary}[Repository average]
\label{cor:sep-agg}
If the statement is made for $G$ repositories with disjoint evaluation sets of
$n_{\mathcal{H}}$ tasks each, then with probability at least $1-\delta$ the
average of the $G$ repository-level differences exceeds the average of the
$\widehat W_r$ by at most
$b_1\big(G\,n_{\mathcal{H}},\ln\tfrac1\delta,\sigma_W^2\big)$.
\end{corollary}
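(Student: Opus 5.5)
The corollary follows from Lemma~\ref{lem:bennett} applied once to the pooled sample of $G\,n_{\mathcal{H}}$ paired contrasts. No union bound over repositories is needed, which is why the confidence level stays $\ln(1/\delta)$.

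\textbf{Setup.} For repository $r\in\{1,\ldots,G\}$, let $\mathcal{D}_r$ be its task distribution and $x'_{r,1},\ldots,x'_{r,n_{\mathcal{H}}}$ its evaluation tasks. Define the per-task paired contrast
\[
Y_{r,i}:=\widehat u_{\mathcal{B}_r^{\mathrm{val}},\mathcal{B}_r^{\mathrm{noval}}}(x'_{r,i}),
\]
whose mean over $i$ is $\widehat W_r$. As in the proof of Theorem~\ref{thm:cert-main}, I condition on all randomness other than the evaluation sets. Both final banks of every repository are measurable functions of the streams and validation panels. By Assumption~\ref{ass:indep}, these never touch any $\mathcal{H}_r$, so the pair $(\mathcal{B}_r^{\mathrm{val}},\mathcal{B}_r^{\mathrm{noval}})$ is fixed before evaluation.

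\textbf{Step 1: pooling.} Conditionally, the $Y_{r,i}$ are independent: within a repository they are i.i.d., and the evaluation sets of different repositories are disjoint. Each $Y_{r,i}$ has mean $\nu_r:=F_{\mathcal{D}_r}(\mathcal{B}_r^{\mathrm{val}})-F_{\mathcal{D}_r}(\mathcal{B}_r^{\mathrm{noval}})$ and variance at most $\sigma_W^2$, where I take $\sigma_W^2$ to bound the total variance of the per-task summand uniformly over repositories. Because every repository contributes exactly $n_{\mathcal{H}}$ tasks, the grand mean of all $G\,n_{\mathcal{H}}$ summands equals $\frac1G\sum_r\widehat W_r$, and its expectation is $\bar\nu:=\frac1G\sum_r\nu_r$.

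\textbf{Step 2: extending the lemma to non-identical summands.} Lemma~\ref{lem:bennett} is stated for i.i.d.\ summands, so I will rewrite its Bennett step for independent, non-identically distributed ones. Bennett's inequality
\[
\mathbb{P}\Big(\tfrac1N\textstyle\sum_{k}(Y_k-\mathbb{E}Y_k)>b\Big)\le\exp\!\Big(-\tfrac{N\bar s^2}{M^2}\,\mathrm{h}\big(\tfrac{Mb}{\bar s^2}\big)\Big)
\]
holds with $\bar s^2$ the average variance. Since $\bar s^2\le\sigma_W^2$, and the exponent is monotone in the variance, the radius $b_M(G n_{\mathcal{H}},\ln\tfrac1\delta,\sigma_W^2)$ still controls the deviation with failure probability $\delta$. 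The one-sided range should also be checked per summand: $Y_{r,i}-\nu_r\le 1-\nu_r$. Taking $M=1$ then requires the same effective-range reduction as Remark~\ref{rem:range} in the regime where the certified floor is nonnegative; otherwise the statement falls back to $M=2$. On the complement of the failure event, $\bar\nu\ge\frac1G\sum_r\widehat W_r-b_1$, which is the claim.

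\textbf{Main obstacle.} The delicate point is this last range check. Remark~\ref{rem:range}'s peeling argument is written for a single mean $\nu$, but here the $\nu_r$ differ across repositories, so each summand has its own range $1-\nu_r$. I would handle this by noting that Bennett's moment-generating-function bound applies summand by summand with individual ranges $M_r=1-\nu_r$. The binding case is then controlled by the same monotonicity condition \eqref{eq:effective-range-condition} applied to the aggregate. If this does not go through cleanly, I would simply state the result with the conservative range $M=2$.
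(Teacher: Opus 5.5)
Your proposal follows the paper's proof: pool the $G\,n_{\mathcal{H}}$ independent paired contrasts from the disjoint evaluation sets, note that their grand mean is $\frac1G\sum_r\widehat W_r$, and apply Lemma~\ref{lem:bennett} once with $N=G\,n_{\mathcal{H}}$ and $L=\ln\frac1\delta$, so no union bound over repositories is needed. You are more careful than the paper, which calls the pooled summands i.i.d.\ and silently keeps $M=1$: your Bennett step for independent, non-identical summands with average variance at most $\sigma_W^2$ is the right repair, but your aggregate peeling cannot justify $M=1$ when $\frac1G\sum_r\nu_r\ge0$ and some $\nu_r<0$, so in that case the valid range is $1-\min_r\nu_r\le2$, which is your stated fallback.
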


The contrast is paired: both runs are scored on the same tasks, so task
difficulty cancels within each task and one radius bounds the difference.

\begin{proof}[Proof of Theorem~\ref{thm:sep-main}]
Both final banks are measurable functions of their own adaptation streams, the
validation panel of the gated run, and the remaining randomness, none of which
involve $\mathcal{H}$ under Assumption~\ref{ass:indep}. Conditioning on that
randomness, the per-task values
\begin{equation}
  W(x'_i)=q\big(x'_i,R(x'_i,\mathcal{B}^{\mathrm{val}})\big)
         -q\big(x'_i,R(x'_i,\mathcal{B}^{\mathrm{noval}})\big),
  \qquad x'_i\in\mathcal{H},
\end{equation}
are i.i.d.\ in $[-1,1]$ with mean
$F_{\mathcal{D}}(\mathcal{B}^{\mathrm{val}})-F_{\mathcal{D}}(\mathcal{B}^{\mathrm{noval}})$
and variance at most $\sigma_W^2$, and their empirical average is $\widehat W$.
Lemma~\ref{lem:bennett} with $L=\ln(1/\delta)$ and $M=1$ gives
\eqref{eq:sepcert}.

For the corollary, index repositories by $r=1,\ldots,G$. Their evaluation sets
are disjoint, so the per-task values are independent across repositories as well
as within them, and the grand average
\begin{equation}
  \frac1G\sum_{r=1}^{G}\widehat W_r
  =\frac{1}{Gn_{\mathcal{H}}}\sum_{r=1}^{G}\sum_{i=1}^{n_{\mathcal{H}}}W_r(x'_{r,i})
\end{equation}
is an average of $Gn_{\mathcal{H}}$ i.i.d.\ bounded summands whose mean is the
average of the $G$ repository-level differences. Applying
Lemma~\ref{lem:bennett} with $N=Gn_{\mathcal{H}}$ gives the claim.
\end{proof}

\subsection{Proof of the reused-panel population-loss guarantee}
\label{app:reuse-proof}

Condition on the randomness $\xi$ of the run other than the validation panel: the
adaptation task stream, solver sampling, extraction sampling and retrieval
tie-breaking. For an accept pattern $S\subseteq[T]$ define recursively
\begin{equation}
  \label{eq:recursion}
  \mathcal{B}(S,1)=\mathcal{B}_0,\qquad
  m(S,t)=\mathrm{Prop}\big(\xi,\mathcal{B}(S,t),t\big),\qquad
  \mathcal{B}(S,t+1)=
  \begin{cases}
    m(S,t)\big(\mathcal{B}(S,t)\big), & t\in S,\\
    \mathcal{B}(S,t), & t\notin S .
  \end{cases}
\end{equation}

\begin{lemma}[Path compression]
\label{lem:compress}
For every $S$ and $t$: (i) $\mathcal{B}(S,t)$ depends on $S$ only through
$S\cap[t-1]$; (ii) the pair $(\mathcal{B}(S,t),m(S,t))$ is $\xi$-measurable;
(iii) the realized run is the path of $S^\ast=\{\text{accepted steps}\}$; and
(iv) for each $k\ge1$ the family
\begin{equation}
  \mathcal{F}_k(\xi):=\Big\{\big(\mathcal{B}(S,t),m(S,t)\big):
  t\in[T_{\mathrm{gate}}],\;S\subseteq[T_{\mathrm{gate}}],\;t\in S,
  \;|S\cap[t-1]|=k-1\Big\}
\end{equation}
of pairs that can occur at an accepted step preceded by exactly $k-1$ accepts
satisfies $|\mathcal{F}_k(\xi)|\le N_k$ with $N_k$ as in \eqref{eq:Nk}.
\end{lemma}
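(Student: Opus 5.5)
The plan is to prove (i)--(iii) by induction on $t$ directly from the recursion \eqref{eq:recursion}, and then to obtain (iv) by counting the distinct values that pair can take.

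\emph{Part (i).} I induct on $t$. At $t=1$, $\mathcal{B}(S,1)=\mathcal{B}_0$ does not depend on $S$ at all. Suppose $\mathcal{B}(S,t)$ depends on $S$ only through $S\cap[t-1]$. Then $m(S,t)=\mathrm{Prop}(\xi,\mathcal{B}(S,t),t)$ depends on $S$ only through the same set. The update to $\mathcal{B}(S,t+1)$ consults only the indicator $\mathbf{1}\{t\in S\}$. Together these show that $\mathcal{B}(S,t+1)$ depends on $S$ only through $S\cap[t]$.

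\emph{Part (ii).} The same induction carries measurability. $\mathcal{B}_0$ is fixed. $\mathrm{Prop}$ is a measurable function of $\xi$, the current bank, and $t$. For a fixed pattern $S$ the case split is deterministic. Hence each pair $(\mathcal{B}(S,t),m(S,t))$ is a function of $\xi$ alone and never of $V$.

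\emph{Part (iii).} I again induct on $t$. In the actual run the bank at step $t$ equals $\mathcal{B}(S^\ast,t)$, and the proposal is generated from $\xi$ and that bank exactly as in the recursion. The gate \eqref{eq:gate} then applies $m_t$ precisely when $t\in S^\ast$. So the realized trajectory coincides with the path of $S^\ast$. The panel $V$ enters only through which pattern $S^\ast$ is selected.

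\emph{Part (iv): the counting.} By (i), a pair occurring at step $t$ is determined by $\xi$, by $t$, and by $S\cap[t-1]$. The constraint $|S\cap[t-1]|=k-1$ allows at most $\binom{t-1}{k-1}$ choices of $S\cap[t-1]$. The further condition $t\in S$ does not enlarge this count, since the pair depends only on $S\cap[t-1]$. Summing over $t$ and bounding by the fixed horizon gives
\[
|\mathcal{F}_k(\xi)|\le\sum_{t}\binom{t-1}{k-1}\le N_k.
\]
Here $T_{\mathrm{gate}}$ is identified with the pre-fixed horizon $T$ in \eqref{eq:Nk}.

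\emph{Main obstacle.} The delicate point is that the set of pairs must be enumerated from $\xi$ alone, before $V$ is consulted. Otherwise the later union bound over $\mathcal{F}_k$ would be invalid. The inductive statement (ii) is exactly what supplies this. I also need to make sure that no randomness in the gate itself, such as rollout noise on $V$, is hidden inside $\mathrm{Prop}$. I would handle this by placing all validation-rollout randomness together with $V$, and keeping only proposal-side randomness in $\xi$.
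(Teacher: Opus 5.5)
Your proposal is correct and follows the paper's proof essentially step for step. Parts (i)--(ii) come from the recursion consulting $S$ only below $t$, together with induction on $t$. Part (iii) comes from the gate accepting exactly on $S^\ast$. Part (iv) notes that each pair is determined by $t$ and a $(k-1)$-subset of $[t-1]$, then sums $\binom{t-1}{k-1}$ over $t$. Your extra remark about keeping validation-rollout randomness with $V$ rather than in $\xi$ is a sensible clarification the paper leaves implicit, but it does not change the argument.
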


\begin{proof}
(i) The recursion \eqref{eq:recursion} consults $S$ only at indices strictly
below $t$, so $\mathcal{B}(S,t)$ is unchanged if $S$ is modified at indices
$\ge t$. (ii) By induction on $t$: $\mathcal{B}(S,1)=\mathcal{B}_0$ is fixed, and
if $\mathcal{B}(S,t)$ is $\xi$-measurable then so is
$m(S,t)=\mathrm{Prop}(\xi,\mathcal{B}(S,t),t)$, the proposal mechanism being a
deterministic function of $\xi$, the current bank and the step index; hence
$\mathcal{B}(S,t+1)$ is $\xi$-measurable. (iii) Immediate from the definition of
the gate \eqref{eq:gate}, which accepts exactly at the steps in $S^\ast$.
(iv) A pair in $\mathcal{F}_k$ is determined by the step $t$ at which it occurs
together with the prefix $S\cap[t-1]$, by (i). The constraint $t\in S$ and
$|S\cap[t-1]|=k-1$ means the prefix is a subset of $[t-1]$ of size exactly
$k-1$, of which there are $\binom{t-1}{k-1}$. Summing over
$t\in[T_{\mathrm{gate}}]$ gives
\begin{equation}
  |\mathcal{F}_k(\xi)|\;\le\;\sum_{t=1}^{T_{\mathrm{gate}}}
  \binom{t-1}{k-1}=N_k . \qedhere
\end{equation}
\end{proof}

The family $\mathcal{F}_k(\xi)$ is defined without reference to the panel: it
enumerates what the optimizer could have produced under every possible sequence
of verdicts. Each of its elements is therefore a fixed bank-and-move pair against
which the panel is an honest i.i.d.\ sample.

\begin{proof}[Proof of Theorem~\ref{thm:reuse-main}]
Fix $k\ge1$ and work conditionally on $\xi$. Let $(\mathcal{B},m)\in
\mathcal{F}_k(\xi)$. By Lemma~\ref{lem:compress}(ii) it is $\xi$-measurable and
hence, by Assumption~\ref{ass:indep}, independent of the panel. The panel values
\begin{equation}
  Y_i:=\widehat u_{m(\mathcal{B}),\mathcal{B}}(x_i),\qquad x_i\in V,
\end{equation}
are therefore i.i.d.\ in $[-1,1]$ with mean $g_{\mathcal{D}}(m\mid\mathcal{B})$
and variance at most $\sigma^2$, and their average is
$\widehat g_V(m\mid\mathcal{B})$. Lemma~\ref{lem:bennett} with $L=L_k$ and, by
Remark~\ref{rem:range}, $M=1+\gamma$, gives
\begin{equation}
  \mathbb{P}\Big(\widehat g_V(m\mid\mathcal{B})-g_{\mathcal{D}}(m\mid\mathcal{B})
  >b_{1+\gamma}(n,L_k,\sigma^2)\Big)\;\le\;e^{-L_k}=\frac{\delta_k}{N_k}.
\end{equation}
A union bound over the at most $N_k$ elements of $\mathcal{F}_k(\xi)$ gives
\begin{equation}
  \label{eq:stratum}
  \mathbb{P}\Big(\exists(\mathcal{B},m)\in\mathcal{F}_k(\xi):\;
  \widehat g_V-g_{\mathcal{D}}>b_{1+\gamma}(n,L_k,\sigma^2)\Big)\;\le\;\delta_k .
\end{equation}
By Theorem~\ref{thm:conv-main} the run accepts at most
$\lfloor1/\rho\rfloor$ moves surely, so only finitely many strata are non-empty,
and summing \eqref{eq:stratum} over $k\ge1$,
\begin{equation}
  \sum_{k\ge1}\delta_k=\delta\sum_{k\ge1}\frac{1}{k(k+1)}
  =\delta\sum_{k\ge1}\left(\frac1k-\frac1{k+1}\right)=\delta .
\end{equation}
Hence with probability at least $1-\delta$ the event in \eqref{eq:stratum} fails
for every $k$ simultaneously. On that event, the $k$-th accepted move of the
realized run lies in $\mathcal{F}_k(\xi)$ by Lemma~\ref{lem:compress}(iii)--(iv),
so
\begin{equation}
  g_{\mathcal{D}}(m\mid\mathcal{B})
  \;\ge\;\widehat g_V(m\mid\mathcal{B})-b_{1+\gamma}(n,L_k,\sigma^2)
  \;\ge\;\rho-b_{1+\gamma}(n,L_k,\sigma^2),
\end{equation}
the last step by the acceptance rule \eqref{eq:gate}. The bound holds
conditionally on $\xi$ with probability at least $1-\delta$ for every $\xi$, and
therefore unconditionally.
\end{proof}

The proof uses no independence between candidates, no freshness of the panel and
no restriction on how proposals react to past verdicts; reusing the rollouts of
the unchanged current bank induces dependence across the elements of
$\mathcal{F}_k(\xi)$, which a union bound tolerates. It does require
$T_{\mathrm{gate}}$ to be fixed in advance, since it determines $N_k$.

\subsection{Proof of the sizing rule}
\label{app:sizing-proof}

\begin{proof}[Proof of Corollary~\ref{cor:sizing}]
By Lemma~\ref{lem:bennett} it suffices to make the displayed relaxation of
$b_M$ at most $\gamma$, that is
\begin{equation}
  \sqrt{\frac{a}{N}}+\frac{c}{N}\;\le\;\gamma,
  \qquad a:=2s^2L,\quad c:=\frac{2ML}{3}.
\end{equation}
Substituting $u:=N^{-1/2}>0$ turns this into the quadratic condition
\begin{equation}
  c\,u^2+\sqrt{a}\,u-\gamma\;\le\;0 ,
\end{equation}
whose left-hand side is increasing in $u$ on $u>0$ and vanishes at
\begin{equation}
  u^\star=\frac{-\sqrt a+\sqrt{a+4c\gamma}}{2c}.
\end{equation}
The condition is therefore $u\le u^\star$, i.e.\ $N\ge(u^\star)^{-2}$, which is
\eqref{eq:sizing}. Expanding $\sqrt{a+4c\gamma}=\sqrt a\,(1+2c\gamma/a+O(\gamma^2))$
gives $u^\star=\gamma/\sqrt a+O(\gamma^2)$ and hence
$N=(2s^2L/\gamma^2)(1+O(\gamma))$, the stated leading order. All reported sizes invert the
exact radius $b_M$ rather than its relaxation, and are therefore no larger.
\end{proof}

\subsection{Numerical instantiation}
\label{app:holdout-choice}

\begin{figure}[t]
  \centering
  \begin{subfigure}[t]{0.48\linewidth}
    \centering
    \includegraphics[width=\linewidth]{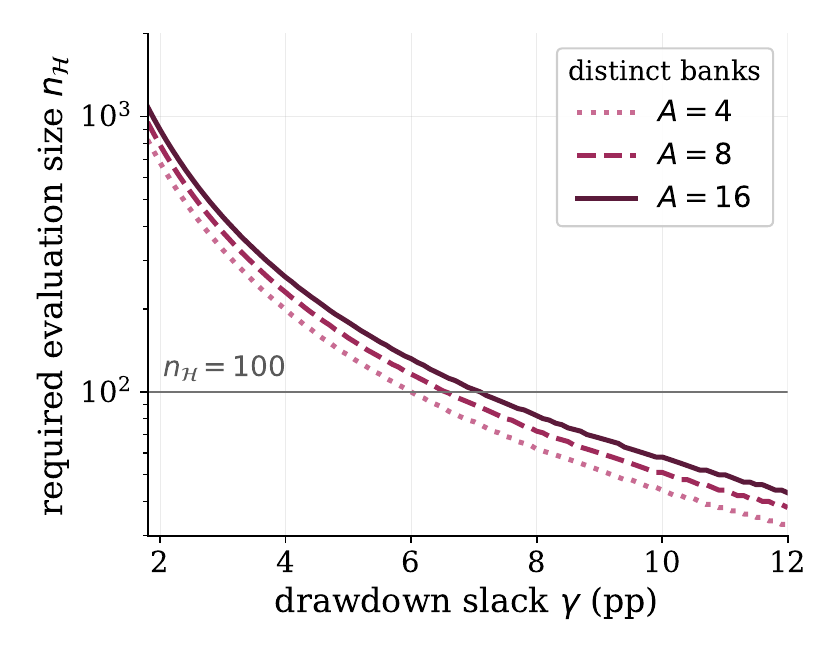}
    \caption{Sizing the evaluation set.}
    \label{fig:sizing-eval}
  \end{subfigure}
  \hfill
  \begin{subfigure}[t]{0.48\linewidth}
    \centering
    \includegraphics[width=\linewidth]{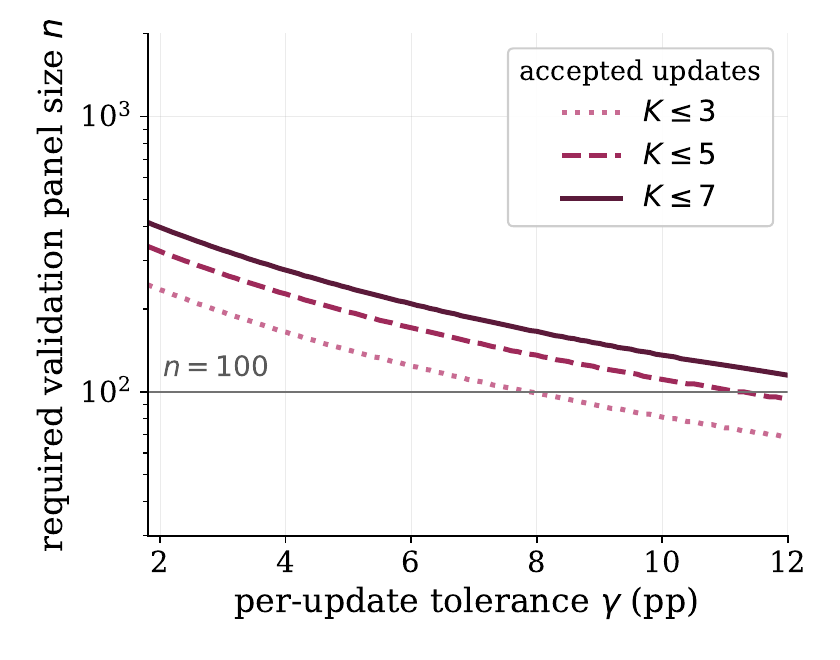}
    \caption{Sizing the validation panel.}
    \label{fig:sizing-panel}
  \end{subfigure}
  \caption{Holdout size required for a prescribed tolerance, from
  Corollary~\ref{cor:sizing}. The vertical axes are logarithmic; horizontal lines mark
  the $100$-task evaluation set and the $50$-task validation panel used in our
  runs.}
  \label{fig:sizing}
\end{figure}

\begin{table}[t]
\centering\small
\setlength{\tabcolsep}{7pt}
\begin{tabular}{@{}lrrrrrr@{}}
\toprule
& \multicolumn{3}{c}{Evaluation set $n_{\mathcal{H}}$}
& \multicolumn{3}{c}{Validation panel $n$}\\
\cmidrule(lr){2-4}\cmidrule(l){5-7}
Tolerance $\gamma$ & $A=4$ & $A=8$ & $A=16$
& $K\le3$ & $K\le5$ & $K\le7$\\
\midrule
$2$pp & $680$ & $787$ & $895$ & $235$ & $324$ & $396$\\
$3$pp & $328$ & $380$ & $432$ & $195$ & $268$ & $327$\\
$5$pp & $136$ & $157$ & $179$ & $142$ & $195$ & $239$\\
$8$pp & $62$ & $72$ & $82$ & $99$ & $136$ & $166$\\
$10$pp & $44$ & $51$ & $58$ & $81$ & $111$ & $136$\\
\bottomrule
\end{tabular}
\caption{Holdout size required for a prescribed tolerance, at $\delta=0.05$
and $\sigma^2=\sigma^2_{\mathcal{H}}=0.025$. Left: tasks needed so that the
certified population drawdown exceeds the measured one by at most $\gamma$, for
$A$ distinct banks among the checkpoints. Right: tasks needed so that every
accepted update of a run making at most $K$ of them costs no more than
$\gamma$, at a gate-query horizon $T=60$ and anchored at the observed accepted
panel gain of $5.8$ points.}
\label{tab:sizing}
\end{table}

Parameters: $\rho=0.02$, $\delta=0.05$, $r=3$, $n=50$, $n_{\mathcal{H}}=100$, and $G=4$
repositories with disjoint evaluation sets. The protocol caps gate queries at one
per triggering task with at most $100$ such tasks; across the main runs the
gate was queried $492$ times in total, an average of $61.5$ per run with a
maximum of $85$, and a horizon of $T=60$ is used for prospective sizing. The
realized number of accepted updates averages $4.75$ with a maximum of $K=7$, so
the localized bound $\lfloor h/\rho\rfloor=10$ of \eqref{eq:Kbound} is not
binding. Accepted updates clear the threshold with a run-level minimum averaging
$5.8$ points and a run-level median of $7.1$ points.

Theorem~\ref{thm:conv-main} also gives the sharper run-specific cap
$\lfloor(1-\widehat F_0)/\rho\rfloor$. The formal cap uses the initial
validation-panel score of each run. For scale, substituting the held-out step-0
rates in Table~\ref{tab:main} gives caps of $20,15,17,30$ for the four GPT-5.5
runs and $10,10,14,14$ for the four Claude-4.6 runs, respectively, rather than
the distribution-free cap $\lfloor1/\rho\rfloor=50$. These values are
illustrative because the held-out rates are not the panel scores used by the
gate; the exact certificate uses the recorded $\widehat F_0$ for each run.

Pooling $24{,}600$ task-level candidate-minus-current differences gives mean
$-0.006184$ and variance $0.024954$, rounded to $\sigma^2=0.025$; the baseline
unstable fraction averages $h=0.2075$, giving $\mathrm{vmass}=0.069$ from
\eqref{eq:vmass}. Two further variance bounds enter the certificates:
$\sigma_{\mathcal{H}}^2$, taken equal to $\sigma^2$, and $\sigma_W^2$, taken as
$0.05$; the certificates are stated as holding under these bounds.

At these values the radii are $4.8$ points for the final gain, $6.5$ points for
the drawdown at $A=8$, and $3.0$ points for the four-repository separation, so
the measured $0.92$, $17.2$ and $8.0$ points certify $\mathrm{DD}\le7.5$,
a gain $\ge+12.4$ and a separation $\ge+5.0$ points respectively. Inverting the
radius, a $10$-point drawdown tolerance requires $51$ evaluation tasks at $A=8$,
and the same tolerance on every accepted update of a run making at most seven of
them requires $136$ panel tasks at $T=60$.

\section{Implementation details}
\label{app:impl}
\subsection{Data construction and reference configuration}
\label{app:impl-reference}

We use one common Extract--Retrieve--Validate protocol across repository-level
experiments; only the repository, backbone, or explicitly ablated component
changes.  Table~\ref{tab:impl-config} summarizes this reference protocol rather
than a configuration specialized to any single repository.

\begin{table}[h]
\centering
\small
\begin{tabular}{@{}p{0.27\linewidth}p{0.65\linewidth}@{}}
\toprule
Component & Reference setting \\
\midrule
Data & One repository-level task stream from the SWE-smith training split \\
Evaluation set & 100 tasks sampled without replacement (seed 42) \\
Validation panel & 50 tasks sampled from the remaining stream (seed 123); the
same fixed panel is used for every acceptance decision \\
Agent & Frozen code-agent backbone (MiniMax-M2.7, GPT-5.5, or
Claude-4.6-sonnet, according to the run) with an interactive shell environment \\
Adaptation & 3 independently sampled rollouts per task \\
Evolution trigger & Mixed-outcome tasks only; at most 100 mixed tasks \\
Persistent state & Separate rich-text principle and skill banks \\
Retrieval & Task-conditioned top-10 principles and top-10 skills \\
Validation & Paired candidate/current comparison on 50 tasks, 3 rollouts per
condition and task, acceptance threshold $\rho=0.02$ \\
Evaluation & 100 held-out tasks at step 0 and every 10 mixed tasks, with 3
rollouts per task \\
\bottomrule
\end{tabular}
\caption{Reference configuration shared by the main repository-level runs.}
\label{tab:impl-config}
\end{table}

\paragraph{Repository selection.}
We study DeepDiff, Pygments, pydicom, and SQLFluff, selected from the
high-volume SWE-smith repositories using two practical criteria.  First, each
provides more than $1{,}000$ nonempty tasks to construct substantial adaptation,
validation, and evaluation partitions, and yields reliable patch-producing
agent runs. Second, they exercise markedly different forms of repository
knowledge: recursive object comparison, syntax highlighting, medical-image
processing, and SQL parsing and linting.  This combination lets us test whether
persistent skills help across distinct APIs, test structures, and debugging
patterns rather than within variants of one application domain.

Within each repository, the partition is constructed before adaptation.  We
first sample and remove the evaluation set, then reserve 50 validation tasks,
and use the remainder as the adaptation stream. Thus no exact instance appears
in more than one partition. The 50 reserved tasks form one fixed validation
panel reused at every comparison rather than a fresh sample at each update. The
prepared order of the adaptation stream is preserved, and independent fixed
seeds make the partitions reproducible.

\subsection{Rollout collection and evolution trigger}
\label{app:impl-agent}

For each adaptation task $x_t$, the agent receives the issue description and an
interactive repository environment in which it can inspect files, edit source
code, and execute tests.  An attempt terminates when the agent submits a patch
or reaches its interaction budget.  The submitted patch is evaluated by the
executable task-specific test harness; this outcome, rather than the agent's
self-assessment, supplies the binary label.

Before generating attempts, Retrieve selects one memory context $M_t$ from the
current bank.  Three rollouts are then sampled independently under the same
issue and frozen context.  Let $R=3$ and let $z_{t,r}\in\{0,1\}$ indicate
whether attempt $r$ passes the executable tests.  We call task $t$ \emph{mixed}
when
\begin{equation}
  0 < \sum_{r=1}^{R}z_{t,r} < R.
  \label{eq:impl-mixed}
\end{equation}
Mixed tasks provide a within-task contrast: the issue, repository state, and
retrieved context are held fixed, but the sampled trajectories contain both
successes and failures.  The full pipeline applies Extract only to this subset,
avoiding lessons inferred solely from uniformly successful or uniformly failed
experience.  All tasks are nevertheless solved and scored.  The evolution
index counts mixed tasks, so the 100-step budget can require observing
approximately $1{,}000$ raw tasks.

\subsection{Extract: contrastive reflection}
\label{app:impl-extract}

Extract maps the labeled rollout set
$\tau_t=\{(h_{t,r},p_{t,r},z_{t,r})\}_{r=1}^{R}$ to a candidate update
$\Delta_t=(\Delta_t^{P},\Delta_t^{S})$, where $h$ is the interaction history
and $p$ the submitted patch.  The reflection context
contains the issue, the three labeled summaries, their final results, and the
names of existing memories.  The latter discourages paraphrased duplicates.
Extract returns at most three principles and three skills.

We distinguish the two memory types semantically.  A \emph{principle} captures
a domain-agnostic reasoning strategy---how to reason under uncertainty.  A
\emph{skill} captures a narrow, executable software-engineering procedure,
search strategy, testing pattern, or recurring mistake to avoid.  The central
system instruction used for same-task contrastive extraction is shown below;
formatting examples are omitted for space.

\begin{center}
\setlength{\fboxsep}{8pt}
\fcolorbox{circlednum}{locushl}{%
\begin{minipage}{0.91\linewidth}
\small
\textbf{System prompt: contrastive Extract}\par\medskip
\emph{You are a learning agent extracting transferable lessons from multiple
attempts at the same software engineering task. Return strict JSON with exactly
three keys: issue\_summary, principles, skills.}\par\medskip
\emph{Focus on: what reasoning distinguished successful attempts from failed
ones; mistakes repeated across failed attempts; concrete techniques that
worked.}\par\medskip
\textbf{Principles:} \emph{meta-cognitive, domain-agnostic reasoning
strategies. Principles describe HOW to think and reason, not what to do
procedurally. They must be useful beyond any single language, framework, or
task type.}\par\medskip
\textbf{Skills:} \emph{specific, actionable software-engineering techniques.
They must be narrow enough to apply immediately and grounded in what went wrong
or what worked in the trajectories.}\par\medskip
\emph{Output only valid JSON. Return at most three new principles and three new
skills; return empty lists if there are no new insights worth adding.}
\end{minipage}}
\end{center}

The accompanying user message instantiates the following template:
\begin{center}
\setlength{\fboxsep}{8pt}
\fcolorbox{black!45}{black!3}{%
\begin{minipage}{0.91\linewidth}
\small
\textbf{User prompt template}\par\medskip
\emph{Task: [issue description]}\par
\emph{This task was attempted $R$ times: [number] succeeded and [number]
failed.}\par\medskip
\emph{Attempt 1 [SUCCESS/FAILED]: [trajectory summary; final result]}\par
\emph{$\ldots$}\par
\emph{Attempt $R$ [SUCCESS/FAILED]: [trajectory summary; final result]}\par\medskip
\emph{Already known principles/skills (do not repeat or closely paraphrase):
[item names]. Extract at most three new principles and three new skills.}
\end{minipage}}
\end{center}

Each principle contains a short search sentence and a structured body with
\textbf{Statement} and \textbf{When to apply}.  Each skill contains a search
sentence, a scope hint, and a structured body with \textbf{Scope},
\textbf{Trigger}, and a numbered \textbf{Procedure}.  Optional anti-patterns
and code examples make the learned text directly usable by a future solver.
The logical output schema is
\begin{verbatim}
{
  "principles": [{"name": "...", "text": "...",
                  "markdown": "..."}],
  "skills": [{"name": "...", "text": "...",
              "scope_hint": "...", "markdown": "..."}]
}
\end{verbatim}
Malformed outputs produce an empty proposal rather than an unconstrained repair
attempt.  This conservative choice prevents malformed reflections from entering
the persistent state.

\subsection{Persistent skill state}
\label{app:impl-memory}

In the reference configuration, the state $M=(P,S)$ consists of separate
append-only banks of principles $P$ and skills $S$. An item stores its name,
search sentence, rich body, source
tasks, support count, creation step, usage count, and source-outcome statistics;
skills additionally retain a scope hint.  These fields support both retrieval
and later inspection of provenance.

Before validation, exact normalized duplicates are merged with their existing
items.  A candidate is also suppressed when its token-set Jaccard similarity to
an existing item is at least $0.45$:
\begin{equation}
 J(a,b)=\frac{|T_a\cap T_b|}{|T_a\cup T_b|}\geq 0.45.
 \label{eq:impl-dedup}
\end{equation}
Here $T$ includes the search sentence and rich body, as well as the scope hint
for skills.  The reference configuration performs no periodic summarization or
LLM-based consolidation, ensuring that accepted updates remain individually
traceable.

\subsection{Retrieve: task-conditioned skill composition}
\label{app:impl-retrieve}

Retrieve independently ranks principles and skills for the current issue.  The
query combines the task identifier and issue description.  For item $i$, let
$Q$ denote query tokens, $C_i$ the item's search-text tokens, and $M_i$ its
metadata tokens (the scope hint for a skill and the empty set for a principle).
The score is
\begin{align}
 s(i,q) ={}& \underbrace{\frac{|Q\cap(C_i\cup M_i)|}
  {\min(|Q|,|C_i\cup M_i|)}}_{\text{overlap coefficient}}
 + \underbrace{\sum_{w\in Q\cap C_i}\log_2\!\max\!\left(\frac{N}{df(w)},1\right)}
 _{\text{text IDF}} \nonumber\\
 &+0.45\underbrace{\sum_{w\in Q\cap M_i}\log_2\!\max\!\left(\frac{N}{df(w)},1\right)}
 _{\text{scope IDF}}
 +0.5I_i+\cos(e_q,e_i),
 \label{eq:impl-retrieval}
\end{align}
where $N$ is the size of the corresponding bank.  The lexical terms reward
direct issue and scope matches, while cosine similarity captures semantic
relatedness between the issue and rich item body. $I_i$ is an importance prior
derived from source-task outcomes, with support count as a fallback. Semantic vectors
use the Qwen3-Embedding-0.6B model when available, with a deterministic hashing
representation as a fallback.  The ten highest-scoring principles and ten
highest-scoring skills are selected independently.

The selected rich bodies are composed into the solver context rather than
re-summarized.  This preserves procedures, triggers, and anti-patterns exactly
as validated.  The injected system-prompt block has the following form:
\begin{center}
\setlength{\fboxsep}{8pt}
\fcolorbox{circlednum}{locushl}{%
\begin{minipage}{0.91\linewidth}
\small
\textbf{System-prompt augmentation produced by Retrieve}\par\medskip
\emph{Learned principles:}\par
\emph{--- [principle name] ---}\par
\emph{[Statement; when to apply; optional anti-pattern]}\par
\emph{$\ldots$}\par\medskip
\emph{Learned skills:}\par
\emph{--- [skill name] ---}\par
\emph{[Scope; trigger; numbered procedure; optional anti-pattern]}\par
\emph{$\ldots$}\par\medskip
\emph{Use the above when relevant, but do not follow them blindly if repository
evidence contradicts them.}
\end{minipage}}
\end{center}
The final instruction treats retrieved memory as defeasible guidance rather than
an authority that can override repository evidence.

\subsection{Validate: paired acceptance on executable outcomes}
\label{app:impl-validate}

Extract proposes a \emph{joint} bank move containing all new principles and
skills from the current mixed task.  Validate compares two conditions on the
same fixed 50-task validation set:
\begin{itemize}
  \item \textbf{current}: task-conditioned retrieval from the current bank $M$;
  \item \textbf{candidate}: the same retrieved context augmented by all items
  in $\Delta_t$.
\end{itemize}
Each condition receives three independently sampled and executably graded
rollouts per task.  For $c\in\{\mathrm{cur},\mathrm{cand}\}$, the empirical pass
rate is
\begin{equation}
 \widehat p_c=\frac{1}{50}\sum_{j=1}^{50}
   \frac{1}{3}\sum_{r=1}^{3}z^{(c)}_{j,r}.
 \label{eq:impl-validation-rate}
\end{equation}
All reported experiments use the acceptance threshold $\rho=0.02$. The proposed
move is committed exactly when
\begin{equation}
 \widehat p_{\mathrm{cand}}\geq \widehat p_{\mathrm{cur}}+0.02.
 \label{eq:impl-acceptance}
\end{equation}
Thus each condition aggregates 150 binary outcomes, and the threshold requires
at least three net additional successful rollouts on the $1/150$ score grid. The
comparison validates the joint state transition rather than attributing utility
to individual items.
Results for the unchanged current state are reused until an accepted update
changes the bank, reducing repeated evaluation without changing the acceptance
rule.

The same 50 tasks are reused for every candidate. They are 50 distinct task
draws with three within-task replications, not 150 independent population-task
samples. Accordingly, $\rho=0.02$ is an empirical acceptance margin rather than
a confidence level. A comparison uses
up to $2\times50\times3=300$ executable rollouts before reuse of the unchanged
current result. Across $24{,}600$ task-level paired samples from the main runs,
the candidate-minus-current difference has mean $-0.006184$ and variance
$0.024954$, rounded to $\sigma^2=0.025$. Theorem~\ref{thm:conv-main}
characterizes fixed-panel convergence with score carry-forward, while
Theorem~\ref{thm:reuse-main} gives the corresponding uniform population-loss
guarantee over the paths induced by panel reuse. Table~\ref{tab:sizing} and
Figure~\ref{fig:sizing} report the resulting sizing rules; the numerical
instantiation is detailed in Appendix~\ref{app:holdout-choice}.

\subsection{Evaluation protocol and reported statistic}
\label{app:impl-evaluation}

Before adaptation and every ten mixed tasks, we freeze the current bank and
evaluate it on a disjoint 100-task panel.  Each task receives three independent
rollouts.  For panel size $H=100$, avg@3 is
\begin{equation}
 \operatorname{avg@3}=100\times
 \frac{\sum_{j=1}^{H}\sum_{r=1}^{3}z_{j,r}}{3H}.
 \label{eq:impl-avg3}
\end{equation}
The denominator includes all $3H$ attempts, including empty or unsuccessful
submissions. The main table reports the final-checkpoint avg@3 first and the
highest observed avg@3 over the trajectory in parentheses. The latter uses this
panel for checkpoint selection and therefore is not an untouched final-test
estimate. When gains are discussed, final and peak gains are computed relative
to the shared step-0 evaluation. We retain the bank state, extracted proposals, rollout
histories, executable verdicts, validation comparisons, and periodic evaluation
results needed to reconstruct each accepted transition.

\subsection{Ablation implementations and prompts}
\label{app:impl-ablations}

Table~\ref{tab:impl-ablations} describes each intervention at the method level.
Because some interventions jointly alter representation and control flow, we
state these couplings explicitly rather than treating every command-line change
as a perfectly isolated causal operator.

\begin{table}[h]
\centering
\footnotesize
\begin{tabular}{@{}p{0.19\linewidth}p{0.73\linewidth}@{}}
\toprule
Ablation & Method-level intervention \\
\midrule
Sentence skills & Replace structured rich bodies with one-sentence principles
and scoped skill sentences throughout extraction, storage, and injection. \\
No principles & Constrain Extract to return an empty principle set; retrieve and
inject skills only. \\
Single document & Replace itemized banks with one principle document and one
skill document, updated through section-level add/replace/delete operations and
injected in full.  This also removes item-level retrieval. \\
Inject all & Remove task-conditioned ranking and inject every stored item in
reverse creation order. \\
Batch of four & Buffer four completed tasks and perform one cross-task
reflection; use one validation rollout per validation task. \\
All tasks & Apply Extract and Validate after all-pass and all-fail tasks as well
as mixed tasks; the 100-step horizon consequently counts raw tasks. \\
\bottomrule
\end{tabular}
\caption{Method-level definitions of the ablations.}
\label{tab:impl-ablations}
\end{table}

\paragraph{Batch extraction.}
For the batch-size ablation, Extract receives four task blocks, each containing
the issue, outcome label, trajectory summary, and final result.  Principles
must recur across at least two tasks or be sufficiently universal to justify
retention from one instance; skills may remain task-specific when reusable.
The core system instruction is:
\begin{center}
\setlength{\fboxsep}{8pt}
\fcolorbox{black!45}{black!3}{%
\begin{minipage}{0.91\linewidth}
\small
\textbf{System prompt: batched Extract}\par\medskip
\emph{You are a learning agent that extracts transferable lessons by comparing
multiple completed software engineering tasks.}\par\medskip
\emph{Principles must appear as a recurring pattern across at least two tasks
in this batch, OR be so universally applicable that a single instance justifies
inclusion. Skills can be task-specific as long as they are genuinely reusable.}
\end{minipage}}
\end{center}
In mixed-only mode, a batch is reflected upon when it contains at least one
mixed task, but neighboring all-pass or all-fail examples remain visible to the
cross-task reflection.  Since this condition also uses one validation rollout
per task, it changes both extraction granularity and validation noise.

\paragraph{Single-document extraction.}
The single-document condition represents each memory type as sections under
named headings.  Reflection proposes bounded section edits of the form
\begin{center}
\setlength{\fboxsep}{8pt}
\fcolorbox{black!45}{black!3}{%
\begin{minipage}{0.91\linewidth}
\small
\textbf{Single-document update schema}\par\medskip
\emph{operation: add, replace, or delete}\par
\emph{target: existing section name}\par
\emph{name: proposed section name}\par
\emph{text: one-sentence summary}\par
\emph{markdown: complete replacement body}\par
\emph{scope hint: where or when a skill applies}
\end{minipage}}
\end{center}
A replacement removes the target section and appends the proposed version; a
missing replacement target becomes an addition, while an addition that reuses
an existing name is ignored.  Because the resulting documents are always
injected in full, this condition jointly changes extraction, storage, and
retrieval rather than merely changing the on-disk format.

\paragraph{Interpretive scope.}
The remaining couplings are similarly important.  All-task evolution changes
both the extraction trigger and the number of raw tasks observed before
stopping; removing validation also removes validation-derived reflection
feedback; and high reasoning changes reflection as well as solving.  Ablation
differences should therefore be interpreted as effects of these operational
method variants, not as estimates of perfectly isolated primitives.

\section{Additional ablation and skill-bank details}
\label{app:bank}

\subsection{Prior work behind each ablated component}
\label{app:ablation-design}

Table~\ref{tab:ablation-design} maps both sides of every component ablation in
Figure~\ref{fig:ablation} to existing self-evolving methods.

\begin{table}[h]
\centering
\footnotesize
\setlength{\tabcolsep}{4pt}
\renewcommand{\arraystretch}{1.2}
\begin{tabular}{@{}>{\raggedright\arraybackslash}p{0.21\linewidth}
                >{\raggedright\arraybackslash}p{0.37\linewidth}
                >{\raggedright\arraybackslash}p{0.37\linewidth}@{}}
\toprule
\textbf{Component} & \textbf{Prior work with our choice}
& \textbf{Prior work with the ablated choice} \\
\midrule
\textbf{Retrieval}\newline{\color{black!55}Selection $\to$ inject all}
  & Voyager~\citep{wang2023voyager}, ReasoningBank~\citep{ouyang2025reasoningbank}, Memp~\citep{fang2025memp}
  & AWM~\citep{wang2024awm}, ExpeL~\citep{zhao2024expel}, SkillOpt~\citep{yang2026skillopt} \\
\textbf{Training stream}\newline{\color{black!55}Contrastive only $\to$ all tasks}
  & ExpeL~\citep{zhao2024expel}, H$^2$R~\citep{ye2025h2r}, Training-Free GRPO~\citep{cai2025trainingfreegrpo}
  & ReasoningBank~\citep{ouyang2025reasoningbank}, ACE~\citep{zhang2025ace}, SkillOpt~\citep{yang2026skillopt} \\
\textbf{Principle memory}\newline{\color{black!55}Remove principles}
  & MARS~\citep{hou2026mars}, SkillRL~\citep{xia2026skillrl}, H$^2$R~\citep{ye2025h2r}
  & Voyager~\citep{wang2023voyager}, AWM~\citep{wang2024awm} \\
\textbf{Skill format}\newline{\color{black!55}Rich markdown $\to$ sentence}
  & Trace2Skill~\citep{ni2026trace2skill}, EvoSkill~\citep{alzubi2026evoskill}, SkillOpt~\citep{yang2026skillopt}
  & ExpeL~\citep{zhao2024expel}, ACE~\citep{zhang2025ace}, Training-Free GRPO~\citep{cai2025trainingfreegrpo} \\
\textbf{Skill storage}\newline{\color{black!55}Bank $\to$ single document}
  & SkillRL~\citep{xia2026skillrl}, ReasoningBank~\citep{ouyang2025reasoningbank}, SWE-Exp~\citep{chen2025sweexp}
  & SkillOpt~\citep{yang2026skillopt}, ACE~\citep{zhang2025ace}, Dynamic Cheatsheet~\citep{suzgun2025dynamiccheatsheet} \\
\textbf{Generation granularity}\newline{\color{black!55}Batch size $1 \to 4$}
  & ReasoningBank~\citep{ouyang2025reasoningbank}, ACE~\citep{zhang2025ace}, Dynamic Cheatsheet~\citep{suzgun2025dynamiccheatsheet}
  & SkillOpt~\citep{yang2026skillopt}, Trace2Skill~\citep{ni2026trace2skill}, Training-Free GRPO~\citep{cai2025trainingfreegrpo} \\
\bottomrule
\end{tabular}
\caption{Prior self-evolving methods that adopt our choice or the ablated
choice for each component in Figure~\ref{fig:ablation}.}
\label{tab:ablation-design}
\end{table}

\subsection{Growth and composition of the skill bank}

Figure~\ref{fig:bank-growth-example} in the main text illustrates the evolution
dynamics for Claude-4.6 on \texttt{pygments}. With validation, the bank grows through seven accepted
updates, reaches $26$ items ($12$ skills and $14$ principles) at step $323$,
and then remains fixed. Without validation, proposed items accumulate until the
bank size reaches $453$ items ($249$ skills and $204$ principles) by
approximately step $400$. Despite this much larger bank, the ungated run falls
from its early peak to an $80.0\%$ final resolved rate, whereas the compact
validated bank retains $92.3\%$.

Table~\ref{tab:bank-size} in the main text extends this comparison to all main runs. Validation
retains $204$ skills and $192$ principles in total, compared with $2{,}422$
skills and $2{,}024$ principles without validation. The gate therefore produces
banks that are $11\times$ more compact rather than allowing extracted items to
accumulate unconditionally.

\section{Time and efficiency analysis}
\label{app:efficiency}

We report total wall-clock time for each main model--repository run. For the
component ablations, whose evolution horizons differ by design, we retain the
per-step normalization used in the original runtime logs.

\subsection{Main experiment runtime}

Table~\ref{tab:main-time} reports total runtime for the main SWE-smith
experiments. Averaged across the twelve model--repository runs, ungated
evolution takes $50.6$ hours, validated evolution takes $122.1$ hours, and
SkillOpt takes $96.1$ hours.

\begin{table}[H]
\centering
\footnotesize
\setlength{\tabcolsep}{3.5pt}
\renewcommand{\arraystretch}{1.15}
\begin{tabular}{@{}llccccc@{}}
\toprule
& & \multicolumn{5}{c}{\textbf{Train+validation wall-clock time (h)}} \\
\cmidrule(lr){3-7}
\textbf{Model} & \textbf{Setting} & \textbf{\texttt{pygments}}
& \textbf{\texttt{sqlfluff}} & \textbf{\texttt{deepdiff}}
& \textbf{\texttt{pydicom}} & \textbf{Model Avg.} \\
\midrule
 \textbf{MiniMax-M2.7} & SkillOpt
  & $118.5$ & $134.1$ & $105.7$ & $121.7$ & $120.0$ \\
  & Evolve w/o val
  & $42.9$ & $46.3$ & $47.2$ & $73.9$ & $52.6$ \\
  & \cellcolor{locushl!55}Evolve w/ val
  & \cellcolor{locushl!55}$127.5$ & \cellcolor{locushl!55}$107.8$ & \cellcolor{locushl!55}$139.0$ & \cellcolor{locushl!55}$153.5$ & \cellcolor{locushl!55}$131.9$ \\
\addlinespace[3pt]
 \textbf{GPT-5.5} & SkillOpt
  & $131.9$ & $140.5$ & $90.5$ & $132.6$ & $123.9$ \\
  & Evolve w/o val
  & $40.1$ & $97.6$ & $55.5$ & $67.3$ & $65.1$ \\
  & \cellcolor{locushl!55}Evolve w/ val
  & \cellcolor{locushl!55}$103.9$ & \cellcolor{locushl!55}$128.2$ & \cellcolor{locushl!55}$101.5$ & \cellcolor{locushl!55}$148.8$ & \cellcolor{locushl!55}$120.6$ \\
\addlinespace[3pt]
 \textbf{Claude-4.6} & SkillOpt
  & $51.8$ & $30.5$ & $29.2$ & $66.5$ & $44.5$ \\
  & Evolve w/o val
  & $32.8$ & $57.6$ & $22.5$ & $23.5$ & $34.1$ \\
  & \cellcolor{locushl!55}Evolve w/ val
  & \cellcolor{locushl!55}$126.2$ & \cellcolor{locushl!55}$119.3$ & \cellcolor{locushl!55}$54.7$ & \cellcolor{locushl!55}$154.2$ & \cellcolor{locushl!55}$113.6$ \\
\midrule
 \textbf{Repo average} & SkillOpt
  & $100.7$ & $101.7$ & $75.1$ & $107.0$ & $96.1$ \\
  & Evolve w/o val
  & $38.6$ & $67.2$ & $41.7$ & $54.9$ & $50.6$ \\
  & \cellcolor{locushl!55}Evolve w/ val
  & \cellcolor{locushl!55}$119.2$ & \cellcolor{locushl!55}$118.4$ & \cellcolor{locushl!55}$98.4$ & \cellcolor{locushl!55}$152.2$ & \cellcolor{locushl!55}$122.1$ \\
\bottomrule
\end{tabular}
\caption{Total wall-clock runtime in hours for the main SWE-smith experiments,
with held-out test evaluation excluded. The \emph{Model Avg.} column averages
repositories within each model, and the \emph{Repo average} block averages
models within each repository. MiniMax-M2.7 is served on local GPUs, so its
times also reflect local inference throughput.}
\label{tab:main-time}
\end{table}

\subsection{Ablation runtime}

Table~\ref{tab:ablation-time} reports runtime per evolution step for the GPT-5.5
\texttt{pydicom} ablations. The variants differ substantially in their work per
step: evolving on all tasks is the fastest at $1.14$ hours per step, while the
sentence-format variant is the slowest among the reported component ablations
at $2.41$ hours per step.

\begin{table}[t]
\centering
\footnotesize
\setlength{\tabcolsep}{6pt}
\renewcommand{\arraystretch}{1.15}
\begin{tabular}{@{}lll r@{}}
\toprule
\textbf{ID} & \textbf{Component} & \textbf{Configuration}
& \textbf{Hours / step} \\
\midrule
\rowcolor{locushl!55}
Ref. & Full pipeline & Mixed-only, validate $50$ & $2.11$ \\
A1 & Skill format & Sentence & $2.41$ \\
A2 & Principle memory & No principles & $2.20$ \\
A3 & Storage & Single document & $2.08$ \\
A4 & Selection & Inject all & $1.83$ \\
A5 & Extraction batch & Batch size $4$ & $1.58$ \\
A6 & Training stream & All tasks & $1.14$ \\
\bottomrule
\end{tabular}
\caption{Wall-clock runtime per evolution step for the GPT-5.5
\texttt{pydicom} component ablations. The reference configuration is shaded.
Only per-step time is reported; total runtime and the number of evolved steps
are omitted.}
\label{tab:ablation-time}
\end{table}

\section{Case study of useful generated skills}
\label{app:cases}

We analyze the final bank from the Claude-4.6 evolution run on
\texttt{pygments}. Each learned skill or principle is evaluated individually
on the same $100$-task holdout with three rollouts per task. The reported gain
is the change in resolved rate from injecting that item alone relative to the
no-memory baseline. ``Steps'' is the number of adaptation tasks processed when
the item entered the bank; equal values indicate items created by the same
accepted update.

\begingroup
\small
\setlength{\tabcolsep}{3pt}
\renewcommand{\arraystretch}{1.08}
\begin{longtable}{@{}r p{0.48\linewidth} p{0.10\linewidth} p{0.16\linewidth} r@{}}
\caption{Single-item utility for all items in the final Claude-4.6
\texttt{pygments} bank, listed in learning order. Gain is measured relative to
the no-memory baseline on the $100$-task holdout.}
\label{tab:artifact-utility}\\
\toprule
\textbf{Steps} & \textbf{Item} & \textbf{Type} & \textbf{Category} & \textbf{Gain} \\
\midrule
\endfirsthead
\multicolumn{5}{c}{\tablename\ \thetable\ continued}\\
\toprule
\textbf{Steps} & \textbf{Item} & \textbf{Type} & \textbf{Category} & \textbf{Gain} \\
\midrule
\endhead
\midrule
\multicolumn{5}{r}{Continued on next page}\\
\endfoot
\bottomrule
\endlastfoot
1 & \nolinkurl{scope_reported_vs_scope_tested} & Principle & Repair strategy & $+5.33$ \\
1 & \nolinkurl{diff_from_baseline_before_fixing} & Principle & Repair strategy & $+9.00$ \\
1 & \nolinkurl{audit_full_modified_file_for_multiple_bugs} & Skill & Repair strategy & $+2.00$ \\
2 & \nolinkurl{partial_fix_can_fail_tests_that_full_fix_passes} & Principle & Repair strategy & $+5.00$ \\
2 & \nolinkurl{co_introduced_bugs_cluster_in_same_commit} & Principle & Repair strategy & $+7.00$ \\
2 & \nolinkurl{verify_fix_completeness_with_full_test_suite} & Skill & Repair strategy & $+2.33$ \\
2 & \nolinkurl{git_show_commit_enumerate_all_changes} & Skill & Repair strategy & $+8.00$ \\
214 & \nolinkurl{operator_substitution_changes_return_semantics} & Principle & Code semantics & $+2.00$ \\
214 & \nolinkurl{boolean_inversion_has_global_side_effects} & Principle & Code semantics & $+1.67$ \\
214 & \nolinkurl{test_analyse_text_with_non_matching_input} & Skill & Code semantics & $+1.33$ \\
217 & \nolinkurl{deferred_import_requires_empty_initialization} & Principle & Code semantics & $+2.00$ \\
217 & \nolinkurl{identical_patch_different_test_outcome_signals_environment_issue} & Principle & Environment & $+1.33$ \\
217 & \nolinkurl{audit_set_operations_for_semantic_correctness} & Skill & Code semantics & $+1.33$ \\
217 & \nolinkurl{check_kwargs_forwarding_in_super_init} & Skill & Code semantics & $+0.33$ \\
241 & \nolinkurl{inverted_condition_implies_inverted_action} & Principle & Code semantics & $+1.33$ \\
241 & \nolinkurl{failed_attempt_same_patch_signals_test_environment} & Principle & Environment & $+1.00$ \\
241 & \nolinkurl{install_test_dependencies_before_running_tests} & Skill & Environment & $+1.67$ \\
241 & \nolinkurl{fix_return_value_alongside_condition_fix} & Skill & Code semantics & $+1.33$ \\
262 & \nolinkurl{identical_patch_nondeterministic_outcome_implies_external_factor} & Principle & Environment & $+1.67$ \\
262 & \nolinkurl{revert_to_known_good_as_fix_strategy} & Principle & Repair strategy & $+9.33$ \\
262 & \nolinkurl{enumerate_all_swapped_values_in_buggy_commit} & Skill & Repair strategy & $+10.33$ \\
262 & \nolinkurl{verify_patch_against_known_good_commit} & Skill & Repair strategy & $+5.33$ \\
323 & \nolinkurl{variable_use_before_assignment_signals_reordered_code} & Principle & Code semantics & $+3.33$ \\
323 & \nolinkurl{none_check_polarity_determines_fallback_path} & Principle & Code semantics & $+0.33$ \\
323 & \nolinkurl{audit_yield_tuple_index_group_consistency} & Skill & Code semantics & $+2.00$ \\
323 & \nolinkurl{trace_all_four_bug_classes_in_single_commit} & Skill & Repair strategy & $+7.00$ \\
\end{longtable}
\endgroup

The learned skills and principles capture knowledge at different levels of
abstraction. Skills generally encode specific, executable repair procedures.
For example, \nolinkurl{audit_yield_tuple_index_group_consistency} records a
Pygments-specific invariant: a lexer callback must take the token position and
value from the same regex capture group. The skill
\nolinkurl{enumerate_all_swapped_values_in_buggy_commit} captures a broader
workflow for multi-edit regressions: inspect the complete commit, classify
every mutation, construct a checklist, and restore every incorrect change.

Principles instead abstract a general diagnostic lesson from a particular
failure. \nolinkurl{variable_use_before_assignment_signals_reordered_code}
recommends restoring the intended data-flow order rather than masking the
symptom with an arbitrary default value. Together, these examples show that
evolution acquires both repository-specific operational knowledge and
transferable coding principles. We reproduce two detailed skills and one
general principle below using the text from the final bank.

\phantomsection\label{app:item-audit-yield}
\begin{center}
\begin{minipage}{\linewidth}
\centering
\setlength{\fboxsep}{0pt}
\setlength{\fboxrule}{0.5pt}
\fcolorbox{black!30}{white}{%
\begin{minipage}{0.96\linewidth}
{\setlength{\fboxsep}{6pt}%
\colorbox{black!7}{\parbox{\dimexpr\linewidth-12pt\relax}{%
\strut\scriptsize\textcolor{black!55}{\ding{110}\enspace}\enspace
\texttt{audit\_yield\_tuple\_index\_group\_consistency.md}%
\hfill\textcolor{black!40}{$\times$}}}}\par
\vspace{9pt}
\hspace*{10pt}\begin{minipage}{\dimexpr\linewidth-20pt\relax}
\small
{\large\bfseries audit\_yield\_tuple\_index\_group\_consistency}\par
\vspace{3pt}{\color{black!20}\hrule height 0.5pt}\vspace{7pt}
\textbf{Scope:} Any lexer callback method that yields
\texttt{(match.start(N), TokenType, match.group(N))} tuples.\par\medskip
\textbf{Trigger:} A bug report or diff shows that match group indices were
swapped between \texttt{match.start()} and \texttt{match.group()} calls in
yield statements.\par\medskip
\textbf{Procedure:}\par
\begingroup
\renewcommand{\labelenumi}{(\arabic{enumi})}
\begin{enumerate}
\setlength{\topsep}{3pt}
\setlength{\partopsep}{0pt}
\setlength{\itemsep}{1pt}
\setlength{\parsep}{0pt}
\item Locate every \texttt{yield} statement in the callback that uses
\texttt{match.start(N)} and \texttt{match.group(M)}.
\item For each yield, confirm that \texttt{N == M}---the position index and the
group index must refer to the same capture group.
\item If they differ (e.g.,
\texttt{yield match.start(2), String, match.group(1)}), swap them back so both
indices match.
\item Cross-check against the regex pattern's group numbering to confirm the
semantic meaning of each group.
\end{enumerate}
\endgroup
\medskip
\textbf{Anti-pattern:}\par\smallskip
{\setlength{\fboxsep}{7pt}%
\setlength{\fboxrule}{0.4pt}%
\fcolorbox{black!15}{black!3}{%
\begin{minipage}{\dimexpr\linewidth-14.8pt\relax}
\footnotesize\ttfamily
\textcolor{gaingreen}{\# Wrong: position from group 2, value from group 1}\par
\textcolor{blue!70!black}{yield match}.start(\textcolor{gaingreen}{2}), String,
\textcolor{blue!70!black}{match}.group(\textcolor{gaingreen}{1})\par
\textcolor{blue!70!black}{yield match}.start(\textcolor{gaingreen}{1}), String,
\textcolor{blue!70!black}{match}.group(\textcolor{gaingreen}{2})\par\medskip
\textcolor{gaingreen}{\# Correct: position and value from the same group}\par
\textcolor{blue!70!black}{yield match}.start(\textcolor{gaingreen}{1}), String,
\textcolor{blue!70!black}{match}.group(\textcolor{gaingreen}{1})\par
\textcolor{blue!70!black}{yield match}.start(\textcolor{gaingreen}{2}), String,
\textcolor{blue!70!black}{match}.group(\textcolor{gaingreen}{2})
\end{minipage}}}
\end{minipage}\vspace{10pt}
\end{minipage}}
\par\smallskip
\small\emph{Code Semantics $\cdot$ Skill $\cdot$ evolved steps 323
$\cdot$ gain $+2.00$ pp.}
\end{minipage}
\end{center}

\phantomsection\label{app:item-enumerate-swaps}
\begin{center}
\begin{minipage}{\linewidth}
\centering
\setlength{\fboxsep}{0pt}
\setlength{\fboxrule}{0.5pt}
\fcolorbox{black!30}{white}{%
\begin{minipage}{0.96\linewidth}
{\setlength{\fboxsep}{6pt}%
\colorbox{black!7}{\parbox{\dimexpr\linewidth-12pt\relax}{%
\strut\scriptsize\textcolor{black!55}{\ding{110}\enspace}\enspace
\texttt{enumerate\_all\_swapped\_values\_in\_buggy\_commit.md}%
\hfill\textcolor{black!40}{$\times$}}}}\par
\vspace{9pt}
\hspace*{10pt}\begin{minipage}{\dimexpr\linewidth-20pt\relax}
\small
{\large\bfseries enumerate\_all\_swapped\_values\_in\_buggy\_commit}\par
\vspace{3pt}{\color{black!20}\hrule height 0.5pt}\vspace{7pt}
\textbf{Scope:} Functions where a bug-introducing commit swapped or inverted
one or more values (booleans, string literals, constants, token types).\par\medskip
\textbf{Trigger:} You find one swapped value (e.g., \texttt{\char39} vs.
\texttt{\char34}) in a function and are about to fix only that one.\par\medskip
\textbf{Procedure:}\par
\begingroup
\renewcommand{\labelenumi}{(\arabic{enumi})}
\begin{enumerate}
\setlength{\topsep}{3pt}
\setlength{\partopsep}{0pt}
\setlength{\itemsep}{1pt}
\setlength{\parsep}{0pt}
\item Run \texttt{git show <bad-commit> -- <file>} and collect every
\texttt{-} (removed) and \texttt{+} (added) line in the affected function.
\item For each changed line, categorize the change: swap, inversion, constant
change, reorder, deletion.
\item List all changes as a checklist before writing any fix.
\item Fix each item on the checklist, not just the first one you noticed.
\item After fixing, re-run \texttt{git show <bad-commit>} and confirm every
\texttt{-} line is restored and every \texttt{+} line is removed.
\end{enumerate}
\endgroup
\medskip
\textbf{Anti-pattern:}\par\smallskip
{\setlength{\fboxsep}{7pt}%
\setlength{\fboxrule}{0.4pt}%
\fcolorbox{black!15}{black!3}{%
\begin{minipage}{\dimexpr\linewidth-14.8pt\relax}
\footnotesize\ttfamily\color{gaingreen}
\# Bug commit changed:\par
\# char = r\char39\char34\char39 if double else
r\char34\char39\char34   (correct)\par
\# to:\par
\# char = r\char34\char39\char34 if double else
r\char39\char34\char39   (wrong)\par
\# AND also changed quantifier '\{3,\}' to '\{2,\}'\par
\# AND also changed state\_name prefix 't' to 'd'\par
\# Fixing only the char swap leaves the other bugs intact.
\end{minipage}}}
\end{minipage}\vspace{10pt}
\end{minipage}}
\par\smallskip
\small\emph{Repair Strategy $\cdot$ Skill $\cdot$ evolved steps 262
$\cdot$ gain $+10.33$ pp.}
\end{minipage}
\end{center}

\phantomsection\label{app:item-use-before-assignment}
\begin{center}
\begin{minipage}{\linewidth}
\centering
\setlength{\fboxsep}{0pt}
\setlength{\fboxrule}{0.5pt}
\fcolorbox{black!30}{white}{%
\begin{minipage}{0.96\linewidth}
{\setlength{\fboxsep}{6pt}%
\colorbox{black!7}{\parbox{\dimexpr\linewidth-12pt\relax}{%
\strut\scriptsize\textcolor{black!55}{\ding{110}\enspace}\enspace
\texttt{variable\_use\_before\_assignment\_signals\_reordered\_code.md}%
\hfill\textcolor{black!40}{$\times$}}}}\par
\vspace{9pt}
\hspace*{10pt}\begin{minipage}{\dimexpr\linewidth-20pt\relax}
\small
{\large\bfseries Use before assignment can indicate reordered code}\par
\vspace{3pt}{\color{black!20}\hrule height 0.5pt}\vspace{7pt}
\textbf{Statement:} When a variable is used before it is assigned in a code
block, suspect that lines were reordered or moved during a buggy edit rather
than that the variable is simply missing.\par\medskip
\textbf{When to apply:}\par
\begin{itemize}
\setlength{\topsep}{3pt}
\setlength{\partopsep}{0pt}
\setlength{\itemsep}{1pt}
\setlength{\parsep}{0pt}
\item You encounter a \texttt{NameError} or a use-before-assignment pattern in
code that previously worked.
\item A bug-introducing commit touched the ordering of statements in a
function.
\item The variable is assigned later in the same function, suggesting it was
moved rather than deleted.
\end{itemize}
\medskip
\textbf{Better pattern:} Trace the full sequence of statements in the function
and check whether any assignment was displaced below a use site. Reordering the
statements back to their logical order (assign before use) is often the complete
fix.\par\medskip
\textbf{Anti-pattern:} Assuming the variable needs to be initialized at the top
of the function with a default value, when the real fix is restoring the
original statement order.
\end{minipage}\vspace{10pt}
\end{minipage}}
\par\smallskip
\small\emph{Code Semantics $\cdot$ Principle $\cdot$ evolved steps 323
$\cdot$ gain $+3.33$ pp.}
\end{minipage}
\end{center}

\finishdocument

\end{document}